%%%%%%%% ICML 2019 EXAMPLE LATEX SUBMISSION FILE %%%%%%%%%%%%%%%%%

\documentclass{article}

\usepackage{xr}
\externaldocument{EntropicGAN-SM-v3}

% Recommended, but optional, packages for figures and better typesetting:
\usepackage{microtype}
\usepackage{booktabs} % for professional tables

% hyperref makes hyperlinks in the resulting PDF.
% If your build breaks (sometimes temporarily if a hyperlink spans a page)
% please comment out the following usepackage line and replace
% \usepackage{icml2019} with \usepackage[nohyperref]{icml2019} above.
\usepackage{hyperref}

% Attempt to make hyperref and algorithmic work together better:

% Use the following line for the initial blind version submitted for review:
\usepackage[accepted]{icml2019}

%%%%%%%%%%%%%%%%%%%%%%%%%%%%%%%%%

\usepackage{graphicx,amsmath,amsfonts, amssymb,bm,breakurl,epsfig,epsf,color,MnSymbol,mathbbol,fmtcount ,semtrans,caption,subcaption, algorithm,
  multirow}
\usepackage[title]{appendix}
\usepackage{caption}
\usepackage[bottom,hang,flushmargin]{footmisc}
\usepackage{comment}
\usepackage[bottom]{footmisc}
\usepackage{enumitem}

\newcommand{\bi}{\begin{itemize}}
\newcommand{\ei}{\end{itemize}}
\newcommand{\bal}{\begin{align}}
\newcommand{\eal}{\end{align}}

\newcommand{\EE}{\mathbb{E}}
\newcommand{\PP}{\mathbb{P}}
\newcommand{\RR}{\mathbb{R}}

\newcommand{\bx}{\mathbf{x}}
\newcommand{\by}{\mathbf{y}}
\newcommand{\bR}{\mathbf{R}}

\newcommand{\bI}{\mathbf{I}}

\newcommand{\bD}{\mathbf{D}}

\newcommand{\bG}{\mathbf{G}}

\newcommand{\grad}{\nabla}

\newcommand{\byn}{\mathbf{y}^{\text{test}}}

\newcommand{\cG}{\mathcal{G}}

\newcommand{\hby}{\hat{\by}}

\newcommand{\hY}{\hat{Y}}
\newcommand{\KL}{\text{KL}}

\def\<{\langle}
\def\>{\rangle}

\usepackage{hyperref}
\definecolor{darkred}{RGB}{150,0,0}
\definecolor{darkgreen}{RGB}{0,150,0}
\definecolor{darkblue}{RGB}{0,0,200}
\hypersetup{colorlinks=true, linkcolor=darkred, citecolor=darkgreen, urlcolor=darkblue}

\numberwithin{equation}{section}

\def \endprf{\hfill {\vrule height6pt width6pt depth0pt}\medskip}
\newenvironment{proof}{\noindent {\bf Proof} }{\endprf\par}

%\newcommand{\note}[1]{{\bf [{\em Note:} #1]}}

%**********

\newtheorem{theorem}{\textbf{Theorem}}
\newtheorem{lemma}{\textbf{Lemma}}
\newtheorem{corollary}{\textbf{Corollary}}

%%%%%%%%%%%%%%%%%%%%%%%

% If accepted, instead use the following line for the camera-ready submission:
%\usepackage[accepted]{icml2019}

% The \icmltitle you define below is probably too long as a header.
% Therefore, a short form for the running title is supplied here:
\icmltitlerunning{Entropic GANs meet VAEs}

%%%%%%%%%%%%%%%%%%%%%%%

\begin{document}

\twocolumn[
\icmltitle{Entropic GANs meet VAEs: A Statistical Approach to Compute Sample Likelihoods in GANs}

% It is OKAY to include author information, even for blind
% submissions: the style file will automatically remove it for you
% unless you've provided the [accepted] option to the icml2019
% package.

% List of affiliations: The first argument should be a (short)
% identifier you will use later to specify author affiliations
% Academic affiliations should list Department, University, City, Region, Country
% Industry affiliations should list Company, City, Region, Country

% You can specify symbols, otherwise they are numbered in order.
% Ideally, you should not use this facility. Affiliations will be numbered
% in order of appearance and this is the preferred way.

\begin{icmlauthorlist}
\icmlauthor{Yogesh Balaji}{umdcs}
\icmlauthor{Hamed Hassani}{upenn}
\icmlauthor{Rama Chellappa}{umdece}
\icmlauthor{Soheil Feizi}{umdcs}
\end{icmlauthorlist}

\icmlaffiliation{umdcs}{Department of Computer Science, University of Maryland, College Park}
\icmlaffiliation{umdece}{Department of Electrical and Computer Engineering, University of Maryland, College Park}
\icmlaffiliation{upenn}{Department of Electrical and Systems Engineering, University of Pennsylvania}

\icmlcorrespondingauthor{Yogesh Balaji}{yogesh@cs.umd.edu}
\icmlcorrespondingauthor{Soheil Feizi}{sfeizi@cs.umd.edu}

% You may provide any keywords that you
% find helpful for describing your paper; these are used to populate
% the "keywords" metadata in the PDF but will not be shown in the document
\icmlkeywords{Machine Learning, ICML}

\vskip 0.3in
]

% this must go after the closing bracket ] following \twocolumn[ ...

% This command actually creates the footnote in the first column
% listing the affiliations and the copyright notice.
% The command takes one argument, which is text to display at the start of the footnote.
% The \icmlEqualContribution command is standard text for equal contribution.
% Remove it (just {}) if you do not need this facility.

%\printAffiliationsAndNotice{}  % leave blank if no need to mention equal contribution
\printAffiliationsAndNotice{} % otherwise use the standard text.

%%%%%%%%%%%%%%%%%%%%%%%

\begin{abstract}
Building on the success of deep learning, two modern approaches to learn a probability model from the data are Generative Adversarial Networks (GANs) and Variational AutoEncoders (VAEs). VAEs consider an explicit probability model for the data and compute a generative distribution by maximizing a variational lower-bound on the log-likelihood function. GANs, however, compute a generative model by minimizing a distance between observed and generated probability distributions without considering an explicit model for the observed data. The lack of having explicit probability models in GANs prohibits computation of sample likelihoods in their frameworks and limits their use in statistical inference problems. In this work, we resolve this issue by constructing an explicit probability model that can be used to compute sample likelihood statistics in GANs. In particular, we prove that under this probability model, a family of Wasserstein GANs with an entropy regularization can be viewed as a generative model that maximizes a variational lower-bound on average sample log likelihoods, an approach that VAEs are based on. This result makes a principled connection between two modern generative models, namely GANs and VAEs. In addition to the aforementioned theoretical results, we compute likelihood statistics for GANs trained on Gaussian, MNIST, SVHN, CIFAR-10 and LSUN datasets. Our numerical results validate the proposed theory. %Our results can facilitate the use of GANs in massive-data and statistical applications such as model selection, anomaly detection and hypothesis-testing.
\end{abstract}

%%%%%%%%%%%%%%%%%%%%%%%

\section{Introduction}
\label{sec:intro}

Learning generative models is an important problem in machine learning and statistics with a wide range of applications in self-driving cars \citep{santana2016learning}, robotics \citep{hirose2017go}, natural language processing \citep{lee2018generative}, domain-transfer \citep{sankaranarayanan2018GTA}, computational biology \citep{ghahramani2018generative}, etc. Two modern approaches to deal with this problem are Generative Adversarial Networks (GANs) \citep{goodfellow2014generative} and Variational AutoEncoders (VAEs) \citep{kingma2013auto, makhzani2015adversarial,rosca2017variational,tolstikhin2017wasserstein,mescheder2017adversarial}. 

VAEs \citep{kingma2013auto} compute a generative model by maximizing a variational lower-bound on average sample $\log$-likelihoods using an explicit probability distribution for the data. GANs, however, learn a generative model by minimizing a distance between observed and generated distributions without considering an explicit probability model for the data. Empirically, GANs have been shown to produce higher-quality generative samples than that of VAEs~\citep{PGAN2018}. However, since GANs do not consider an explicit probability model for the data, we are unable to compute sample likelihoods using their generative models. Obtaining sample likelihoods and {\it posterior} distributions of latent variables are critical in several statistical inference applications. Inability to obtain such statistics within GAN's framework severely limits their applications in statistical inference problems.  

In this paper, we resolve this issue for a general formulation of GANs by providing a theoretically-justified approach to compute sample likelihoods using GAN's generative model. Our results can facilitate the use of GANs in massive-data applications such as model selection, sample selection, hypothesis-testing, etc. 

\begin{figure*}
\includegraphics[width=0.8\textwidth]{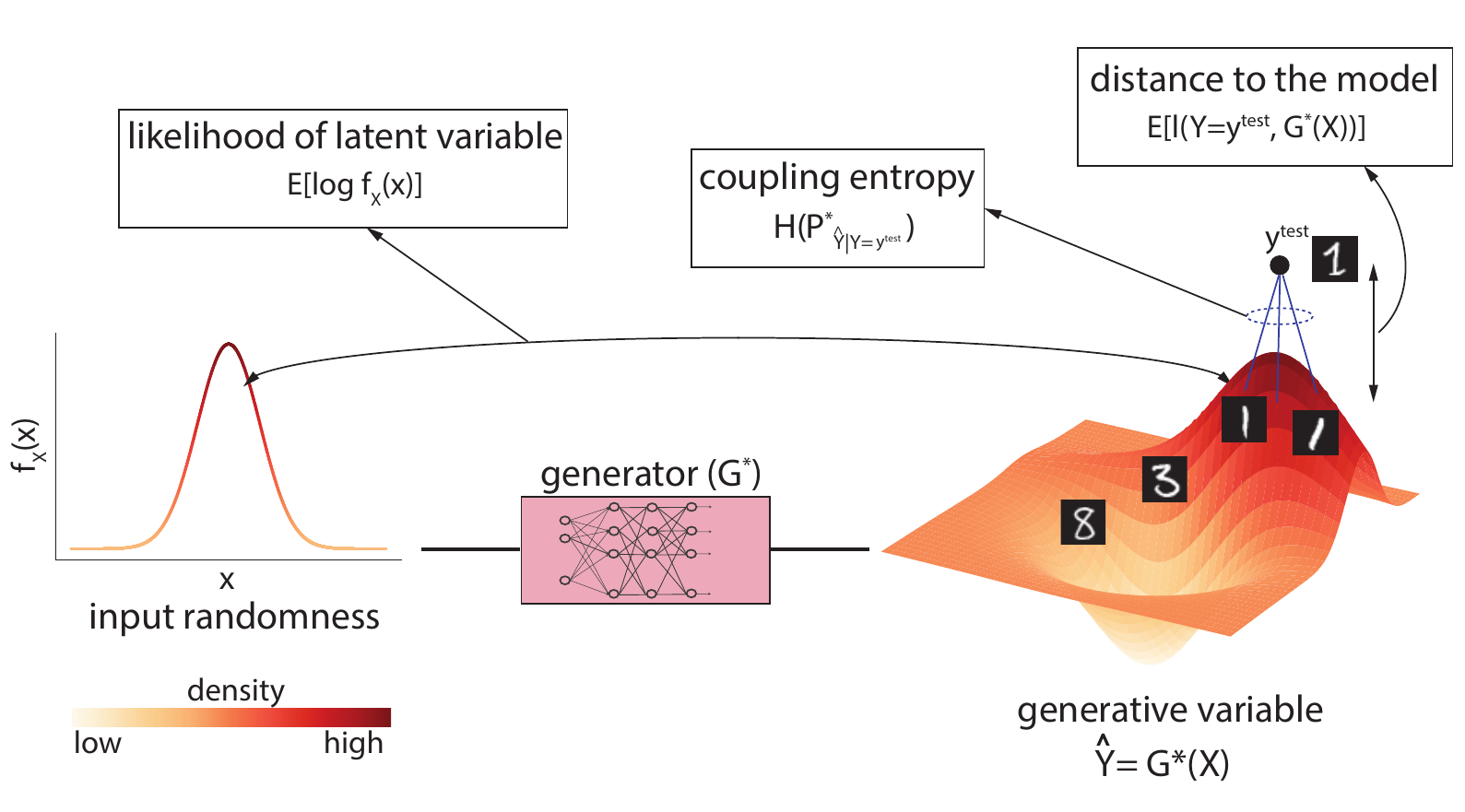}
\centering
\caption{A statistical framework for GANs. By training a GAN model, we first compute optimal generator $\bG^*$ and optimal coupling between the observed variable $Y$ and the latent variable $X$. The likelihood of a test sample $\byn$ can then be lower-bounded using a combination of three terms: (1) the expected distance of $\byn$ to the distribution learnt by the generative model, (2) the entropy of the coupled latent variable given $\byn$ and (3) the likelihood of the coupled latent variable with $\byn$. }
\label{fig:framework} 
\end{figure*}

We first state our main results {\it informally} without going into technical details while precise statements of our results are presented in Section \ref{sec:main}. Let $Y$ and $\hY:=\bG(X)$ represent the observed (i.e. real) and generative (i.e. fake or synthetic) variables, respectively. $X$ (i.e. the latent variable) is the random vector used as the input to the generator $\bG(.)$. Consider the following explicit probability model of the data given a latent sample $X=\bx$:
\begin{align}\label{eq:fY-intro}
f_{Y|X=\bx}(\by) \propto \exp(-\ell(\by,\bG(\bx))),
\end{align}
where $\ell(.,.)$ is a loss function. $f_{Y|X=\bx}(\by)$ is the model that we consider for the underlying data distribution. This is a reasonable model for the data as the function $\bG$ can be a complex function. Similar data models have been used in VAEs. Under this explicit probability model, we show that minimizing the objective of an {\it optimal transport} GAN (e.g. Wasserstein GAN \cite{arjovsky2017wasserstein}) with the cost function $\ell(.,.)$ and an entropy regularization \citep{cuturi2013sinkhorn,seguy2017large} maximizes a variational lower-bound on average sample likelihoods. That is

\begin{align*}
\underbrace{\EE_{\PP_Y}\left[\log f_Y(Y)\right]}_{\text{ave. sample log likelihoods}}\geq & - \frac{1}{\lambda} \underbrace{\left\{\EE_{\PP_{Y,\hY}}\left[\ell(Y,\hY)\right]-\lambda H\left(\PP_{Y,\hY}\right)\right\}}_{\text{entropic GAN objective}} \nonumber \\
		&+constants.
\end{align*}

If $\ell(\by,\hby)=\|\by-\hby\|_2$, the optimal transport (OT) GAN simplifies to WGAN \citep{arjovsky2017wasserstein} while if $\ell(\by,\hby)=\|\by-\hby\|_2^2$, the OT GAN simplifies to the quadratic GAN (or, W2GAN) \citep{feizi2017understanding}. The precise statement of this result can be found in Theorem \ref{thm:main}. 

This result provides a statistical justification for GAN's optimization and puts it in par with VAEs whose goal is to maximize a lower bound on sample likelihoods. We note that entropy regularization has been proposed primarily to improve computational aspects of GANs \citep{genevay18learningsinkhorn}. Our results provide an additional statistical justification for this regularization term. Moreover, using the GAN's training, we obtain a coupling between the observed variable $Y$ and the latent variable $X$. This coupling provides the conditional distribution of the latent variable $X$ given an observed sample $Y=\by$. The explicit model of \eqref{eq:fY-intro} acts similar to the {\it decoder} in the VAE framework, while the coupling computed using GANs acts as an {\it encoder}.

Another key question is how to estimate the likelihood of a new sample $\byn$ given the generative model trained using GANs. For instance, if we train a GAN on {\it stop-sign} images, upon receiving a new image, one may wish to compute the likelihood of the new sample $\byn$ according to the trained generative model. In standard GAN formulations, the support of the generative distribution lies on the range of the optimal generator function. Thus, if the observed sample $\byn$ does not lie in that range (which is very likely in practice), there is no way to assign a sensible likelihood score to the sample. Below, we show that using the explicit probability model of \eqref{eq:fY-intro}, we can {\it lower-bound} the likelihood of this sample $\byn$. This is similar to the variational lower-bound on sample likelihoods used in VAEs. Our numerical results in Section \ref{sec:sim} show that this lower-bound well-reflects the expected trends of the true sample likelihoods.

Let $\bG^*$ and $\PP_{Y,X}^*$ be the optimal generator and the optimal coupling between real and latent variables, respectively. The optimal coupling $\PP_{Y,X}^*$ can be computed efficiently for entropic GANs as we explain in Section \ref{sec:gans}. For other GAN architectures, one may approximate such couplings as we explain in Section \ref{sec:sim}. The log likelihood of a new test sample $\byn$ can be lower-bounded as

 % Let $\byn$ be a new test sample. We can lower-bound the log likelihood of this sample as
\begin{align}
\underbrace{\log f_Y(\byn)}_{\text{log likelihood}} \geq & - \underbrace{\EE_{\PP_{X|Y=\byn}^*}\left[\ell(\byn,\bG^*(\bx))\right]}_{\text{distance to the generative model}} \nonumber \\ 
		&+ \underbrace{H\left(\PP_{X|Y=\byn}^*\right)}_{\text{coupling entropy}}+\underbrace{\EE_{\PP_{X|Y=\byn}^*}\left[- \frac{\|\bx\|^2}{2}\right]}_{\text{likelihood of latent variable}}. 
\end{align}
We present the precise statement of this result in Corollary \ref{thm:likelihood}. This result combines three components in order to approximate the likelihood of a sample given a trained generative model: (1) if the distance between $\byn$ to the generative model is large, the likelihood of observing $\byn$ from the generative model is small, (2)
if the entropy of the coupled latent variable is large, the coupled latent variable has large randomness, thus, this contributes positively to the sample likelihood, and (3) 
if the likelihood of the coupled latent variable is large, the likelihood of the observed test sample will be large as well. Figure \ref{fig:framework} provides a pictorial illustration of these components.

To summarize, we make the following {\bfseries theoretical contributions} in this paper:

\begin{itemize}
\item We construct an explicit probability model for a family of optimal transport GANs (such as the Wasserstein GAN) that can be used to compute likelihood statistics within GAN's framework (eq. \eqref{eq:fY} and Corollary \ref{thm:likelihood}). 

\item We prove that, under this probability model, the objective of an entropic GAN is a variational lower bound for average sample log likelihoods (Theorem \ref{thm:main}). This result makes a principled connection between two modern generative models, namely GANs and VAEs.
\end{itemize}

Moreover, we conduct the following {\bfseries empirical experiments} in this paper:

\begin{itemize}
\item We compute likelihood statistics for GANs trained on Gaussian, MNIST, SVHN, CIFAR-10 and LSUN datasets and shown the consistency of these empirical results with the proposed theory (Section \ref{sec:sim} and Appendix D).

\item We demonstrate the tightness of the variational lower bound of entropic GANs for both linear and non-linear generators (Section \ref{sec:tightness}).
\end{itemize}

\subsection{Related Work}
Connections between GANs and VAEs have been investigated in some of the recent works as well ~\citep{hu2018unifyinggenmodels, Mescheder2017advvariational}. In ~\cite{hu2018unifyinggenmodels}, GANs are interpreted as methods performing variational inference on a generative model in the label space. In their framework, observed data samples are treated as latent variables while the generative variable is the indicator of whether data is real or fake. The method in ~\cite{Mescheder2017advvariational}, on the other hand, uses an auxiliary discriminator network to rephrase the maximum-likelihood objective of a VAE as a two-player game similar to the objective of a GAN. Our method is different from both of these approaches as we consider an explicit probability model for the data, and show that the entropic GAN objective maximizes a variational lower bound under this probability model, thus allowing sample likelihood computation in GANs similar to VAEs. 

Of relevance to our work is \cite{Wu2016quant}, in which annealed importance sampling (AIS) is used to evaluate the approximate likelihood of decoder-based generative models. More specifically, a Gaussian observation model with a fixed variance is used as the generative distribution for GAN-based models on which the AIS is computed.  Gaussian observation models may not be appropriate specially in high-dimensional spaces. Our approach, on the other hand, makes a connection between GANs and VAEs by constructing a theoretically-motivated model for the data distribution in GANs, and uses this model to compute sample likelihoods.  

In an independent recent work, the authors of \cite{rigollet2018mldeconv} show that entropic optimal transport corresponds to the objective function in maximum-likelihood estimation for deconvolution problems involving additive Gaussian noise. Our result is similar in spirit, however, our focus is in providing statistical interpretations to GANs by constructing a probability model for data distribution using GANs. We show that under this model, the objective of the entropic GAN is a variational lower bound to the average log-likelihood function. This lower-bound can then be used for computing sample likelihood estimates in GANs.

%%%%%%%%%%%%%%%%%%%%%%%

\section{A Variational Bound for GANs}\label{sec:main}

Let $Y\in \RR^d$ represent the real-data random variable with a probability density function $f_Y(\by)$. GAN's goal is to find a generator function $\bG:\mathbb{R}^r\to \mathbb{R}^d$ such that $\hY:=\bG(X)$ has a similar distribution to $Y$. Let $X$ be an $r$-dimensional random variable with a fixed probability density function $f_X(\bx)$. Here, we assume $f_X(.)$ is the density of a normal distribution. In practice, we observe $m$ samples $\{\by_1,...,\by_m\}$ from $Y$ and generate $m'$ samples from $\hY$, i.e., $\{\hby_1,...,\hby_{m'}\}$ where $\hby_i=\bG(\bx_i)$ for $1\leq i\leq m$. We represent these empirical distributions by $\PP_Y$ and $\PP_{\hY}$, respectively. Note that the number of generative samples $m'$ can be arbitrarily large.  

GAN computes the optimal generator $\bG^*$ by minimizing a distance between the observed distribution $\PP_Y$ and the generative one $\PP_{\hY}$. Common distance measures include {\it optimal transport} measures (e.g. Wasserstein GAN \citep{arjovsky2017wasserstein}, WGAN+Gradient Penalty \citep{gulrajani2017improved}, GAN+Spectral Normalization \citep{miyato2018spectral}, WGAN+Truncated Gradient Penalty \citep{petzka2017regularization}, relaxed WGAN \citep{guo2017relaxed}), and {\it divergence} measures (e.g. the original GAN's formulation \citep{goodfellow2014generative}, $f$-GAN \citep{nowozin2016f}), etc.

In this paper, we focus on GANs based on optimal transport (OT) distance \citep{villani2008optimal,arjovsky2017wasserstein} defined for a general loss function $\ell(.,.)$ as follows
\begin{align}\label{eq:OT}
W^{(\ell)}(\PP_Y,\PP_{\hY}):=\min_{\PP_{Y,\hY}}~ \EE\left[\ell(Y,\hY)\right].
\end{align}
$\PP_{Y,\hY}$ is the joint distribution whose marginal distributions are equal to $\PP_Y$ and $\PP_{\hY}$, respectively. If $\ell(\by,\hby)=\|\by-\hby\|_2$, this distance is called the first-order Wasserstein distance and is referred to by $W_1(.,.)$, while if  $\ell(\by,\hby)=\|\by-\hby\|_2^2$, this measure is referred to by $W_2(.,.)$ where $W_2$ is the second-order Wasserstein distance \citep{villani2008optimal}.

The optimal transport (OT) GAN is formulated using the following optimization problem \citep{arjovsky2017wasserstein}:
\begin{align}\label{opt:gan-gen-loss}
\min_{\bG\in\cG}~ W^{(\ell)}(\PP_Y,\PP_{\hY}),
\end{align}
where $\cG$ is the set of generator functions. Examples of the OT GAN are WGAN \citep{arjovsky2017wasserstein} corresponding to the first-order Wasserstein distance \footnote{Note that some references (e.g. \citep{arjovsky2017wasserstein}) refer to the first-order Wasserstein distance simply as the Wasserstein distance. In this paper, we explicitly distinguish between different Wasserstein distances.} and the quadratic GAN (or, the W2GAN) \citep{feizi2017understanding} corresponding to the second-order Wasserstein distance.

Note that optimization \eqref{opt:gan-gen-loss} is a {\it min-min} optimization. The objective of this optimization is not smooth in $\bG$ and it is often computationally expensive to obtain a solution for it \citep{sanjabi2018solving}. One approach to improve computational aspects of this optimization problem is to add a regularization term to make its objective {\it strongly} convex \citep{cuturi2013sinkhorn,seguy2017large}. The Shannon entropy function is defined as $H(\PP_{Y,\hY}):=-\EE\left[ \log \PP_{Y,\hY} \right]$. The negative Shannon entropy is a common strongly-convex regularization term. This leads to the following optimal transport GAN formulation with the entropy regularization, or for simplicity, the {\it entropic GAN} formulation:
\begin{align}\label{opt:regularized-GAN}
\min_{\bG\in\cG}~ \min_{\PP_{Y,\hY}}~ \EE\left[\ell(Y,\hY)\right]- \lambda H\left(\PP_{Y,\hY}\right),
\end{align}
where $\lambda$ is the regularization parameter.

There are two approaches to solve the optimization problem \eqref{opt:regularized-GAN}. The first approach uses an iterative method to solve the {\it min-min} formulation \citep{genevay2017sinkhorn}. Another approach is to solve an equivelent {\it min-max} formulation by writing the dual of the inner minimization \citep{seguy2017large,sanjabi2018solving}. The latter is often referred to as a GAN formulation since the min-max optimization is over a set of generator functions and a set of discriminator functions. The details of this approach are further explained in Section~\ref{sec:gans}.

In the following, we present an explicit probability model for entropic GANs under which their objective can be viewed as maximizing a lower bound on average sample likelihoods.

\begin{theorem}\label{thm:main}
Let the loss function be shift invariant, i.e., $\ell(\by,\hby)=h(\by-\hby)$. Let
\begin{align}\label{eq:fY}
f_{Y|X=\bx}(\by)=C \exp(-\ell(\by,\bG(\bx))/\lambda),
\end{align}
be an explicit probability model for $Y$ given $X=\bx$ for a well-defined normalization
\begin{align}
C:=\frac{1}{\int_{\by\in \mathbb{R}^d} \exp(-\ell(\by,G(\bx))/\lambda)}.
\end{align}
Then, we have
\begin{align}\label{eq:lower-bound}
\underbrace{\EE_{\PP_Y}\left[\log f_Y(Y)\right]}_{\text{ave. sample likelihoods}}\geq & - \frac{1}{\lambda} \underbrace{\left\{\EE_{\PP_{Y,\hY}}\left[\ell(Y,\hY)\right]-\lambda H\left(\PP_{Y,\hY}\right)\right\}}_{\text{entropic GAN objective}} \nonumber \\
		&+constants.
\end{align}
In words, the entropic GAN maximizes a lower bound on sample likelihoods according to the explicit probability model of \eqref{eq:fY}. 
\end{theorem}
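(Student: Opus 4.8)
The plan is to read the right-hand side of \eqref{eq:lower-bound} as an evidence lower bound of the type used in VAEs: the decoder is the explicit model \eqref{eq:fY}, the prior is the fixed Gaussian $f_X$, and the variational posterior (``encoder'') is a conditional density $q(\bx|\by)$ which will ultimately be taken to be the conditional $\PP^*_{X|Y=\by}$ of the coupling produced by the entropic GAN. Before the computation I would record the single consequence of shift invariance that the whole scheme needs: writing $\ell(\by,\hby)=h(\by-\hby)$ and substituting $\bu=\by-\bG(\bx)$ shows $\int_{\RR^d}\exp(-\ell(\by,\bG(\bx))/\lambda)\,d\by$ is independent of $\bx$, so the normalization $C$ in \eqref{eq:fY} is a true constant --- independent of $\bx$, of the coupling, and of $\bG$ --- and $f_{Y|X=\bx}$ is a bona fide density for every $\bx$, so $f_Y(\by)=\int f_{Y|X=\bx}(\by)f_X(\bx)\,d\bx$ is well defined. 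Without shift invariance this ``constant'' would leak an $\bx$-dependence into the bound.

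For the core step I would apply Jensen's inequality (equivalently, non-negativity of $\KL(q(\cdot|\by)\,\|\,f_{X|Y=\by})$) to $\log f_Y(\by)=\log\EE_{q(\cdot|\by)}[f_{Y|X=\bx}(\by)f_X(\bx)/q(\bx|\by)]$, obtaining
\begin{align*}
\log f_Y(\by)\ \ge\ &\EE_{q(\cdot|\by)}[\log f_{Y|X=\bx}(\by)]\\
&+\EE_{q(\cdot|\by)}[\log f_X(\bx)]+H(q(\cdot|\by)).
\end{align*}
Substituting \eqref{eq:fY} and $\log f_X(\bx)=-\tfrac{1}{2}\|\bx\|^2-\tfrac{r}{2}\log(2\pi)$ turns the first two terms into $-\tfrac{1}{\lambda}\EE_{q(\cdot|\by)}[\ell(\by,\bG(\bx))]+\EE_{q(\cdot|\by)}[-\tfrac{1}{2}\|\bx\|^2]$ plus the constant $\log C-\tfrac{r}{2}\log(2\pi)$; with $q(\cdot|\by)=\PP^*_{X|Y=\by}$ this is exactly Corollary~\ref{thm:likelihood}. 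Averaging over $\by\sim\PP_Y$ and writing $\PP_{Y,X}$ for the joint with $Y$-marginal $\PP_Y$ and conditional $q$, the right-hand side becomes $-\tfrac{1}{\lambda}\EE_{\PP_{Y,X}}[\ell(Y,\bG(X))]+\EE_{\PP_{Y,X}}[-\tfrac{1}{2}\|X\|^2]+\EE_{\PP_Y}[H(\PP_{X|Y})]+\log C-\tfrac{r}{2}\log(2\pi)$.

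It then remains to recognize this as $-\tfrac{1}{\lambda}\{\EE_{\PP_{Y,\hY}}[\ell(Y,\hY)]-\lambda H(\PP_{Y,\hY})\}+\text{const}$. Pushing $\PP_{Y,X}$ forward through $(\by,\bx)\mapsto(\by,\bG(\bx))$ produces a coupling $\PP_{Y,\hY}$ of $\PP_Y$ and $\PP_{\hY}$ with $\EE_{\PP_{Y,X}}[\ell(Y,\bG(X))]=\EE_{\PP_{Y,\hY}}[\ell(Y,\hY)]$, so the loss terms agree. For the remaining terms: (i) $\EE_{\PP_{Y,X}}[-\tfrac{1}{2}\|X\|^2]$ depends on $\PP_{Y,X}$ only through its second marginal, which is pinned by the coupling constraint (the $\hY$-marginal must be $\PP_{\hY}$, equivalently the $X$-marginal is the fixed latent distribution), so it is a constant independent of the coupling and of $\bG$; (ii) the chain rule for differential entropy gives $\EE_{\PP_Y}[H(\PP_{X|Y})]=H(\PP_{Y,X})-H(\PP_Y)$, and $H(\PP_Y)$ is a constant; (iii) since $\hY=\bG(X)$ is a deterministic relabeling of the second coordinate, $H(\PP_{Y,X})=H(\PP_{Y,\hY})$ up to an absolute constant. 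Collecting (i)--(iii) gives $\EE_{\PP_Y}[\log f_Y(Y)]\ge -\tfrac{1}{\lambda}\EE_{\PP_{Y,\hY}}[\ell(Y,\hY)]+H(\PP_{Y,\hY})+\text{const}$, where the constant is $\bG$-independent; since this holds for every coupling $\PP_{Y,\hY}$ of $\PP_Y$ and $\PP_{\hY}$, taking the minimizing one recovers the entropic GAN objective and hence \eqref{eq:lower-bound}, while $q(\cdot|\by)=\PP^*_{X|Y=\by}$ makes the bound tight.

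I expect the genuine obstacle to be step (iii), together with the very meaning of $H(\PP_{Y,\hY})=-\EE[\log\PP_{Y,\hY}]$, when $\bG:\RR^r\to\RR^d$ is non-injective or lowers dimension and $\hY$ concentrates on a Lebesgue-null subset of $\RR^d$; the clean way out is to read all entropies in the empirical/discrete sense, where $\hby_i=\bG(\bx_i)$ is an honest relabeling of distinct atoms, or else to impose nondegeneracy on $\bG$ and absorb the Jacobian term into the constant. The rest of the work is merely verifying that every $\by$- and coupling-dependent stray term collapses to a single $\bG$-independent constant --- which rests on shift invariance for $\log C$ and on the marginal constraint for the $\|X\|^2$ term --- and is otherwise the textbook ELBO manipulation plus the entropy chain rule.
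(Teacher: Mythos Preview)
Your approach is essentially the paper's: an ELBO with decoder \eqref{eq:fY}, Gaussian prior $f_X$, and variational posterior $\PP_{X|Y=\by}$, followed by averaging over $\PP_Y$ and the entropy chain rule. The paper starts from the Bayes identity $\log f_Y(\by)=\log f_{Y|X=\bx}(\by)+\log f_X(\bx)-\log f_{X|Y=\by}(\bx)$ and then takes $\EE_{\PP_{X|Y=\by}}$ of both sides, which is just a rearranged way of writing your Jensen step; the resulting bound and the handling of $\log C$ and the $\|X\|^2$ term are identical.

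The one substantive discrepancy is your step (iii). Your primary claim, that $H(\PP_{Y,X})=H(\PP_{Y,\hY})$ ``up to an absolute constant,'' is false for differential entropy: the change-of-variables correction is $\EE[\log|\det J_{\bG}(X)|]$ (when it even makes sense), which depends on $\bG$ and on the coupling, so your second proposed fix---``absorb the Jacobian term into the constant''---does not work and would destroy the conclusion that minimizing the entropic GAN objective maximizes the bound. The paper instead invokes the \emph{inequality} $H(\PP_{X,Y})\ge H(\PP_{\bG(X),Y})$ (data processing), which is exactly what is needed since $H$ enters the bound with a positive sign; this is valid for Shannon entropy of the empirical (discrete) distributions, i.e., your first proposed fix, and becomes an equality when $\bG$ is injective on the sampled latents. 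So keep your discrete reading, state (iii) as an inequality rather than an equality-plus-constant, and drop the Jacobian alternative.
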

The proof of this theorem is presented in Appendix A. This result has a similar flavor to that of VAEs \citep{kingma2013auto, rosca2017variational} where a generative model is computed by maximizing a lower bound on sample likelihoods.

Having a shift invariant loss function is critical for Theorem \ref{thm:main} as this makes the normalization term $C$ to be independent from $\bG$ and $\bx$ (to see this, one can define $\by':=\by-\bG(\bx)$ in \eqref{eq:lower-bound}). The most standard OT GAN loss functions such as the $L_2$ for WGAN \citep{arjovsky2017wasserstein} and the quadratic loss for W2GAN \citep{feizi2017understanding} satisfy this property. 

One can further simplify this result by considering specific loss functions. For example, we have the following result for the entropic GAN with the quadratic loss function. 
\begin{corollary}\label{cor:linear_model}
Let $\ell(\by,\hby)=\|\by-\hby\|^2/2$. Then, $f_{Y|X=\bx}(.)$ of \eqref{eq:fY} corresponds to the multivariate Gaussian density function and $C=\frac{1}{\sqrt{\left(2\pi \lambda \right)^d}}$. In this case, the constant term in \eqref{eq:lower-bound} is equal to $-\log (m)-d\log(2\pi \lambda )/2-r/2-\log(2\pi)/2$. 
\end{corollary}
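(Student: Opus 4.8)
The plan is to specialize Theorem~\ref{thm:main} to the quadratic loss $\ell(\by,\hby)=\|\by-\hby\|^2/2$ and compute the normalization constant $C$ together with the collection of constant terms that appear on the right-hand side of~\eqref{eq:lower-bound}. First I would evaluate $C$: with $\by':=\by-\bG(\bx)$ we have
\begin{align*}
C^{-1}=\int_{\RR^d}\exp\!\left(-\|\by'\|^2/(2\lambda)\right)d\by'=(2\pi\lambda)^{d/2},
\end{align*}
so $f_{Y|X=\bx}(\by)=\cN(\bG(\bx),\lambda \bI)$ evaluated at $\by$, which is exactly the multivariate Gaussian claim; here $\lambda$ plays the role of the variance. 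This step is a one-line Gaussian integral and should not be an obstacle.

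Next I would track where the constants in~\eqref{eq:lower-bound} come from by recalling the structure of the proof of Theorem~\ref{thm:main} (in Appendix~A). The lower bound is obtained by the usual variational/ELBO-type manipulation: writing $\log f_Y(\byn)\ge \EE_{\PP_{X|Y=\byn}}[\log f_{Y|X}(\byn)]+H(\PP_{X|Y=\byn})+\EE_{\PP_{X|Y=\byn}}[\log f_X(X)]$, substituting the explicit model~\eqref{eq:fY}, and then averaging over the $m$ empirical samples $\PP_Y$. The term $\log C=-d\log(2\pi\lambda)/2$ is a constant independent of $\bG$ and of the coupling, and contributes once (it does not get multiplied by $1/\lambda$ in the way the $\ell$ and $H$ terms combine, since those two package into the entropic GAN objective while $\log C$ is pulled out). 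The Gaussian latent density $f_X$ contributes $\log f_X(\bx)=-\|\bx\|^2/2-r\log(2\pi)/2$; the $-\|\bx\|^2/2$ piece stays inside an expectation (it is not constant in the coupling) and is therefore the "likelihood of latent variable" term rather than a constant, while $-r\log(2\pi)/2 = -r/2 \cdot \log(2\pi)$... here I should be careful: the stated constant is $-r/2-\log(2\pi)/2$, which suggests a slightly different bookkeeping, so I would match the normalization conventions used in Appendix~A exactly. The $-\log(m)$ term arises from the empirical averaging over $\PP_Y$ having $m$ atoms each of mass $1/m$ (equivalently, from an $H(\PP_Y)=\log m$ type term that appears when the joint entropy $H(\PP_{Y,\hY})$ is decomposed).

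The key steps in order: (1) compute $C$ by the Gaussian integral and identify $f_{Y|X=\bx}$ as $\cN(\bG(\bx),\lambda\bI)$; (2) retrieve the exact form of the constant in Theorem~\ref{thm:main}'s bound from Appendix~A, expressed in terms of $\log C$, the normalization of $f_X$, and the $\log m$ from the empirical measure; (3) substitute $\log C=-d\log(2\pi\lambda)/2$ and the Gaussian $f_X$ normalization, separating the pieces that remain inside expectations (the $-\|\bx\|^2/2$ term) from the genuinely constant pieces; (4) collect to obtain $-\log(m)-d\log(2\pi\lambda)/2-r/2-\log(2\pi)/2$.

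The main obstacle I anticipate is purely a matter of careful bookkeeping rather than any real difficulty: making sure the factor-of-$1/\lambda$ in front of the entropic GAN objective is handled consistently, so that $\log C$ and the $f_X$ normalization land as additive constants (not scaled by $1/\lambda$), and reconciling the apparent split of $-r\log(2\pi)/2$ into $-r/2$ and $-\log(2\pi)/2$ with whatever convention Appendix~A uses for the latent normalization (possibly $f_X$ is written with a per-coordinate standard normal so that one coordinate's normalization $-\log(2\pi)/2$ is separated out, with the remaining $(r-1)$ or the $\EE[\|\bx\|^2]=r$ contribution absorbed to give the $-r/2$). Once the conventions of Appendix~A are pinned down, assembling the constant is routine arithmetic.
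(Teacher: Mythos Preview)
Your overall approach---specialize Theorem~\ref{thm:main} to the quadratic loss, compute $C$ by the Gaussian integral, and then read off the constant from the bound in Appendix~A---is exactly the paper's approach. Step~(1) is fine and matches the paper verbatim.

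The confusion you flag in step~(3) is real, but your tentative diagnosis is slightly off. In the proof of Theorem~\ref{thm:main}, after averaging over $\PP_Y$ the term $\EE_{\PP_{X|Y=\by}}[-\|\bx\|^2/2]$ becomes $\EE_{\PP_X}[-\|\bx\|^2/2]$, and since the $X$-marginal of any admissible coupling is the fixed latent law $f_X$ (the paper even replaces $\PP_X$ by $f_X$ explicitly), this is the \emph{constant} $\EE_{f_X}[-\|X\|^2/2]=-r/2$. So the $-r/2$ in the corollary is not a leftover of the latent normalization; it is the evaluated second-moment term. It does \emph{not} remain a coupling-dependent expectation in the Theorem~\ref{thm:main} bound (that only happens in Corollary~\ref{thm:likelihood}, where you condition on a single $\byn$ and cannot average the $X$-marginal back to $f_X$).

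With that cleared up, the constant in \eqref{eq:lower-bound} as derived in Appendix~A is
\[
\log C \;-\;\log m\;-\;\tfrac{r+\log 2\pi}{2},
\]
where $-\log m$ comes from $H(\PP_Y)=\log m$, $-r/2$ is the second-moment term above, and $-\tfrac{1}{2}\log 2\pi$ is the normalization the paper writes for $f_X$. Substituting $\log C=-\tfrac{d}{2}\log(2\pi\lambda)$ gives exactly the stated constant. Your instinct that the latent normalization ought to be $-\tfrac{r}{2}\log(2\pi)$ for an $r$-dimensional standard Gaussian is correct; the paper's Appendix~A writes $\log f_X(\bx)=-\log\sqrt{2\pi}-\|\bx\|^2/2$, i.e.\ a one-dimensional normalization, and carries that through. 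So the discrepancy you noticed is not a hidden convention you are missing---it is simply how the paper records $f_X$, and matching the corollary's stated constant requires following that choice rather than the $r$-dimensional one.
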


Let $\bG^*$ and $\PP_{Y,X}^*$ be optimal solutions of an entropic GAN optimization \eqref{opt:regularized-GAN} (note that the optimal coupling can be computed efficiently using \eqref{eq:transport_plan}). Let $\byn$ be a newly observed sample. An important question is what the likelihood of this sample is given the trained generative model. Using the explicit probability model of \eqref{eq:fY} and the result of Theorem \ref{thm:main}, we can (approximately) compute sample likelihoods as explained in the following corollary.

\begin{corollary}\label{thm:likelihood}
Let $\bG^*$ and $\PP_{Y,\hY}^*$ (or, alternatively $\PP_{Y,X}^*$) be optimal solutions of the entropic GAN \eqref{opt:regularized-GAN}. Let $\byn$ be a new observed sample. We have
\begin{align}\label{eq:bayes2}
\log f_Y(\byn)&\geq -\frac{1}{\lambda}\Big\{\EE_{\PP_{X|Y=\byn}^*}\left[\ell(\byn,\bG^*(\bx))\right] \\
	&-\lambda H\left(\PP_{X|Y=\byn}^*\right)\Big\}+\EE_{\PP_{X|Y=\byn}^*}\left[- \frac{\|\bx\|^2}{2}\right] \nonumber \\
	& + constants. \nonumber
\end{align}
The inequality becomes tight iff $\KL\left(\PP_{X|Y=\byn}^*||f_{X|Y=\byn} \right)=0$, where $\KL(.||.)$ is the Kullback-Leibler divergence between two distributions. The r.h.s of equation ~\eqref{eq:bayes2} will be denoted as "surrogate likelihood" in the rest of the paper.
\end{corollary}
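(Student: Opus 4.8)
The plan is to mimic the variational argument behind Theorem \ref{thm:main}, but applied to a single test point $\byn$ rather than to the empirical distribution $\PP_Y$. First I would write down Bayes' rule for the joint model induced by \eqref{eq:fY} together with the fixed Gaussian prior $f_X$: namely $f_{X,Y}(\bx,\byn) = f_{Y|X=\bx}(\byn) f_X(\bx)$, so that
\begin{align*}
\log f_Y(\byn) = \log \int f_{Y|X=\bx}(\byn) f_X(\bx)\, d\bx.
\end{align*}
Then, for \emph{any} distribution $Q$ on the latent variable (here I will take $Q = \PP_{X|Y=\byn}^*$, the conditional of the optimal entropic-GAN coupling), I would introduce $Q$ inside the integral and apply Jensen's inequality (equivalently, the standard evidence-lower-bound / ELBO identity):
\begin{align*}
\log f_Y(\byn) = \EE_{Q}\!\left[\log \frac{f_{Y|X=\bx}(\byn) f_X(\bx)}{Q(\bx)}\right] + \KL\!\left(Q \,\|\, f_{X|Y=\byn}\right).
\end{align*}
Since the KL term is nonnegative, dropping it gives the lower bound, and it vanishes exactly when $Q = f_{X|Y=\byn}$, which is the claimed tightness condition.

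Next I would expand the three pieces of the ELBO. Using $f_{Y|X=\bx}(\byn) = C\exp(-\ell(\byn,\bG^*(\bx))/\lambda)$ from \eqref{eq:fY} gives the term $-\tfrac{1}{\lambda}\EE_{Q}[\ell(\byn,\bG^*(\bx))] + \log C$; the $-\EE_Q[\log Q(\bx)]$ term is precisely $H(\PP_{X|Y=\byn}^*)$; and since $f_X$ is the standard normal density in $\RR^r$, $\log f_X(\bx) = -\|\bx\|^2/2 - (r/2)\log(2\pi)$, whose expectation under $Q$ contributes the $\EE_{Q}[-\|\bx\|^2/2]$ term plus a constant $-(r/2)\log(2\pi)$. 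Collecting the $\bx$-independent pieces $\log C - (r/2)\log(2\pi)$ into ``$constants$'' yields exactly \eqref{eq:bayes2}. (If one wants the explicit constant one invokes Corollary \ref{cor:linear_model} for the quadratic loss, where $C = 1/\sqrt{(2\pi\lambda)^d}$.)

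The only genuine subtlety — and the step I would be most careful about — is the choice $Q = \PP_{X|Y=\byn}^*$ and what ``$\PP_{X|Y=\byn}^*$ for a new sample $\byn$'' even means, since the entropic-GAN coupling $\PP_{Y,X}^*$ was computed on the empirical $\PP_Y$, whose support need not contain $\byn$. I would handle this by noting that the inequality above holds for \emph{any} valid $Q$ on $\RR^r$, so the corollary is a legitimate lower bound for whatever conditional one plugs in; the role of the optimal coupling is only to supply a \emph{good} (nearly tight) choice of $Q$, motivated by Theorem \ref{thm:main}, and in practice (Section \ref{sec:sim}) one either reads off $\PP_{X|Y=\byn}^*$ from the Sinkhorn transport plan in \eqref{eq:transport_plan} or approximates it. Everything else — Jensen, the ELBO decomposition, substituting the Gaussian prior — is routine, so the write-up is short; the conceptual content is entirely in recognizing \eqref{eq:bayes2} as a per-sample instance of the variational bound and in the KL characterization of its tightness.
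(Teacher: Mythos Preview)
Your proposal is correct and follows essentially the same route as the paper: the paper derives the corollary inside the proof of Theorem~\ref{thm:main} by writing the pointwise Bayes identity $\log f_Y(\byn)=\log f_{Y|X=\bx}(\byn)+\log f_X(\bx)-\log f_{X|Y=\byn}(\bx)$, taking the expectation over $\PP_{X|Y=\byn}^*$, and adding/subtracting $\log \PP_{X|Y=\byn}^*(\bx)$ to expose the $\KL$ and entropy terms---which is exactly your ELBO decomposition. Your remark that the bound holds for \emph{any} $Q$ (so that the meaning of $\PP_{X|Y=\byn}^*$ at a fresh $\byn$ is immaterial to validity) is also in line with how the paper treats this in Section~\ref{sec:sim}.
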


%%%%%%%%%%%%%%%%%%%%%%%

\section{Review of GAN's Dual Formulations}\label{sec:gans}
In this section, we discuss dual formulations for OT GAN \eqref{opt:gan-gen-loss} and entropic GAN \eqref{opt:regularized-GAN} optimizations. These dual formulations are {\it min-max} optimizations over two function classes, namely the generator and the discriminator. Often local search methods such as alternating gradient descent (GD) are used to compute a solution for these min-max optimizations.

First, we discuss the dual formulation of OT GAN optimization \eqref{opt:gan-gen-loss}. Using the duality of the inner minimization, which is a linear program, we can re-write optimization \eqref{opt:gan-gen-loss} as follows \citep{villani2008optimal}:
\begin{align}\label{opt:gan-gen-loss-dual-two-functions}
\min_{\bG\in\cG}~ \max_{\bD_1,\bD_2}~&\EE\left[\bD_1(Y)\right]-\EE\left[\bD_2(\bG(X))\right],
\end{align}
where $\bD_1(\by)-\bD_2(\hby)\leq \ell(\by,\hby)$ for all $(\by,\hby)$. The maximization is over two sets of functions $\bD_1$ and $\bD_2$ which are coupled using the loss function. Using the Kantorovich duality \cite{villani2008optimal}, we can further simplify this optimization as follows:
\begin{align}\label{opt:gan-gen-loss-dual}
\min_{\bG\in\cG}~ \max_{\bD:\ell-\text{convex}}~&\EE\left[\bD(Y)\right]-\EE\left[\bD^{(\ell)}(\bG(X))\right],
\end{align}
where $\bD^{(\ell)}(\hY):=\inf_{Y} \ell(Y,\hY)+\bD(Y)$ is the $\ell$-conjugate function of $\bD(.)$ and $\bD$ is restricted to $\ell$-convex functions \citep{villani2008optimal}. The above optimization provides a general formulation for OT GANs. If the loss function is $\|.\|_2$, then the optimal transport distance is referred to as the first order Wasserstein distance. In this case, the min-max optimization \eqref{opt:gan-gen-loss-dual} simplifies to  the following optimization \citep{arjovsky2017wasserstein}: 
\begin{align}\label{opt:WGAN-minmax}
\min_{\bG\in \cG}~ \max_{\bD \text{:1-Lip}}~ \EE\left[\bD(Y)\right]-\EE\left[\bD(\bG(X))\right].
\end{align}
This is often referred to as Wasserstein GAN, or WGAN \citep{arjovsky2017wasserstein}.
If the loss function is quadratic, then the OT GAN is referred to as the quadratic GAN (or, W2GAN) \citep{feizi2017understanding}. 

\begin{comment}
In this case, the min-max optimization \eqref{opt:gan-gen-loss-dual} simplifies to  the following optimization \citep{feizi2017understanding}:
\begin{align}\label{opt:W2GAN-minmax}
\min_{\bG\in \cG}~ \max_{\bD \text{:convex}}~ \EE\left[\|Y\|^2-\bD(Y)\right]-\EE\left[\bD(\bG(X))-\|G(X)\|^2\right].
\end{align}
One nice property of the quadratic GAN is that the gradient of the optimal discriminator provides the optimal coupling between $Y$ and $\hY=\bG(X)$\citep{villani2008optimal,chernozhukov2017monge}:
\begin{lemma}\label{lem:det-coupling}
Let $\PP_Y$ be absolutely continuous whose support contained in a convex set in $\mathbb{R}^d$. Let $\bD^{opt}$ be the optimal discriminator for a given generator $\bG$. This solution is unique. Moreover, we have
\begin{align}\label{eq:deterministic-mapping}
\grad \bD^{opt}(Y)\stackrel{\text{dist}}{=}\bG(X),
\end{align}
where $\stackrel{\text{dist}}{=}$ means matching distributions.
\end{lemma}
This result can be used to characterize optimal couplings (i.e. $\PP_{Y,\hY}^*$) in the primal min-min formulation.
\end{comment}
Similarly, the dual formulation of the entropic GAN \eqref{opt:regularized-GAN} can be written as the following optimization \citep{cuturi2013sinkhorn,seguy2017large} \footnote{Note that optimization \eqref{opt:entropic-gan-dual} is dual of optimization \eqref{opt:regularized-GAN} when the terms $\lambda H(\PP_{Y})+\lambda H(\PP_{\hY})$ have been added to its objective. Since for a fixed $\bG$ (fixed marginals), these terms are constants, they can be ignored from the optimization objective without loss of generality. }:

\begin{align}\label{opt:entropic-gan-dual}
\min_{\bG\in\cG}~ \max_{\bD_1,\bD_2}~&\EE\left[\bD_1(Y)\right]-\EE\left[\bD_2(\bG(X))\right] \\
		&-\lambda\EE_{\PP_Y\times \PP_{\hY}}\left[\exp\left( v(\by,\hby)/\lambda \right)  \right], \nonumber
\end{align}
where 
\begin{align}\label{eq:violation}
v(\by,\hby):=\bD_1(\by)-\bD_2(\hby)-\ell(\by,\hby).
\end{align}
 Note that the hard constraint of optimization \eqref{opt:gan-gen-loss-dual-two-functions} is being replaced by a soft constraint in optimization \eqref{opt:gan-gen-loss-dual}. In this case, optimal primal variables $\PP_{Y,\hY}^*$ can be computed according to the following lemma \citep{seguy2017large}:
\begin{lemma}\label{lem:transport_plan}
Let $\bD_1^*$ and $\bD_2^*$ be the optimal discriminator functions for a given generator function $\bG$ according to optimization \eqref{opt:entropic-gan-dual}. Let
\begin{align}
v^*(\by,\hby):=\bD_1^*(\by)-\bD_2^*(\hby)-\ell(\by,\hby).
\end{align}
Then,
\begin{align}\label{eq:transport_plan}
\PP_{Y,\hY}^*(\by,\hby)=\PP_Y(\by) \PP_{\hY}(\hby) \exp{\left(v^*\left(\by,\hby\right)/\lambda\right)}.
\end{align}
\end{lemma}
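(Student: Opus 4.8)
The plan is to recognize Lemma~\ref{lem:transport_plan} as the standard primal--dual recovery formula for entropic optimal transport and to prove it by exhibiting the coupling $\PP^*$ defined through \eqref{eq:transport_plan} as the optimizer of the inner minimization in \eqref{opt:regularized-GAN}. Fix the generator $\bG$, so the marginals $\PP_Y$ and $\PP_{\hY}$ are fixed; in the paper's setting both are empirical, so this inner problem---minimizing $\EE[\ell(Y,\hY)]-\lambda H(\PP_{Y,\hY})$ over couplings with these marginals---is a finite-dimensional, \emph{strictly} convex program (the negative Shannon entropy is strictly convex) with affine equality constraints. The product coupling $\PP_Y\times\PP_{\hY}$ lies in the relative interior of the feasible set, so Slater's condition holds, strong duality applies, and the program has a unique minimizer fully characterized by the KKT conditions. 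It therefore suffices to show that $\PP^*$ (i) is feasible and (ii) satisfies the stationarity condition of the Lagrangian for an appropriate choice of multipliers.

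For feasibility, I would use the first-order optimality conditions of the dual \eqref{opt:entropic-gan-dual} at $(\bD_1^*,\bD_2^*)$. Perturbing $\bD_1$ shows that $\int \PP_{\hY}(\hby)\exp(v^*(\by,\hby)/\lambda)\,d\hby=1$ at every atom $\by$ of $\PP_Y$, and perturbing $\bD_2$ shows that $\int \PP_Y(\by)\exp(v^*(\by,\hby)/\lambda)\,d\by=1$ at every atom $\hby$ of $\PP_{\hY}$. Integrating $\PP^*(\by,\hby)=\PP_Y(\by)\PP_{\hY}(\hby)\exp(v^*(\by,\hby)/\lambda)$ against $\hby$ (resp.\ against $\by$) and using these identities recovers exactly the marginals $\PP_Y$ and $\PP_{\hY}$; nonnegativity is immediate from the exponential. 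For optimality I would write the Lagrangian of the inner primal, with multiplier functions $\phi(\by)$ and $\psi(\hby)$ enforcing the two marginal constraints; stationarity in $\PP(\by,\hby)$ reads $\ell(\by,\hby)+\lambda\bigl(\log\PP(\by,\hby)+1\bigr)-\phi(\by)-\psi(\hby)=0$, i.e.\ $\PP$ must be of the form $\exp\bigl((\phi(\by)+\psi(\hby)-\ell(\by,\hby))/\lambda-1\bigr)$. Choosing $\phi(\by)=\lambda\log\PP_Y(\by)+\bD_1^*(\by)+\lambda/2$ and $\psi(\hby)=\lambda\log\PP_{\hY}(\hby)-\bD_2^*(\hby)+\lambda/2$ makes this stationarity equation solved precisely by $\PP=\PP^*$. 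Since the Lagrangian is convex in $\PP$, a feasible stationary point is a global minimizer, and strict convexity forces it to be the unique one; hence $\PP^*_{Y,\hY}=\PP^*$, which is \eqref{eq:transport_plan}. Equivalently, one can note that the reparametrization $(\phi,\psi)\leftrightarrow(\bD_1,\bD_2)$ just displayed turns the dual over $(\phi,\psi)$ into exactly optimization \eqref{opt:entropic-gan-dual}, so the optimal multipliers correspond to $(\bD_1^*,\bD_2^*)$.

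The main obstacle I anticipate is bookkeeping rather than conceptual: one must keep the entropy/normalization conventions consistent---the ambiguity between the regularizer $\lambda\EE[\log\PP]$ and $\lambda\EE[\log\PP-1]$, and the footnote's caveat that \eqref{opt:entropic-gan-dual} is the dual of \eqref{opt:regularized-GAN} only after the constants $\lambda H(\PP_Y)+\lambda H(\PP_{\hY})$ are added to the objective---and check that after reparametrization the reduced dual is \eqref{opt:entropic-gan-dual} \emph{with the same $v$ as in \eqref{eq:violation}}. To avoid continuous functional-derivative language altogether, it is cleanest to run the whole argument in the finite-dimensional empirical setting, where $\PP_{Y,\hY}$ is an $m\times m'$ matrix, all ``derivatives'' are ordinary partials, and the Sinkhorn structure $\PP^*=\diag(\ba)\,K\,\diag(\mathbf{b})$ with $K_{ij}=\exp(-\ell(\by_i,\hby_j)/\lambda)$ makes the factorization in \eqref{eq:transport_plan} transparent; a brief a.e.\ qualification handles the degenerate case where some atom carries zero mass.
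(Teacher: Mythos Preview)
Your proposal is correct and is precisely the standard primal--dual/KKT derivation of the entropic transport plan. Note, however, that the paper does not supply its own proof of Lemma~\ref{lem:transport_plan}: it simply cites \cite{seguy2017large} for the result, so there is nothing in the paper to compare against beyond observing that your argument is the one underlying that reference.
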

This lemma is important for our results since it provides an efficient way to compute the optimal coupling between real and generative variables (i.e. $P_{Y,\hY}^*$) using the optimal generator ($\bG^*$) and discriminators ($\bD_1^*$ and $\bD_2^*$) of optimization \eqref{opt:entropic-gan-dual}. It is worth noting that without the entropy regularization term, computing the optimal coupling using the optimal generator and discriminator functions is not straightforward in general (unless in some special cases such as W2GAN \citep{villani2008optimal,feizi2017understanding}). This is another additional computational benefit of using entropic GAN.

%%%%%%%%%%%%%%%%%%%%%%%

\section{Experimental Results}\label{sec:sim}
% {\bfseries Yogesh: add Figures 1-3 here and add Gaussian experiments to the Appendix.}

\begin{figure*}[t]
\centering
\begin{subfigure}{.44\textwidth}
\centering
\includegraphics[width=0.9\textwidth]{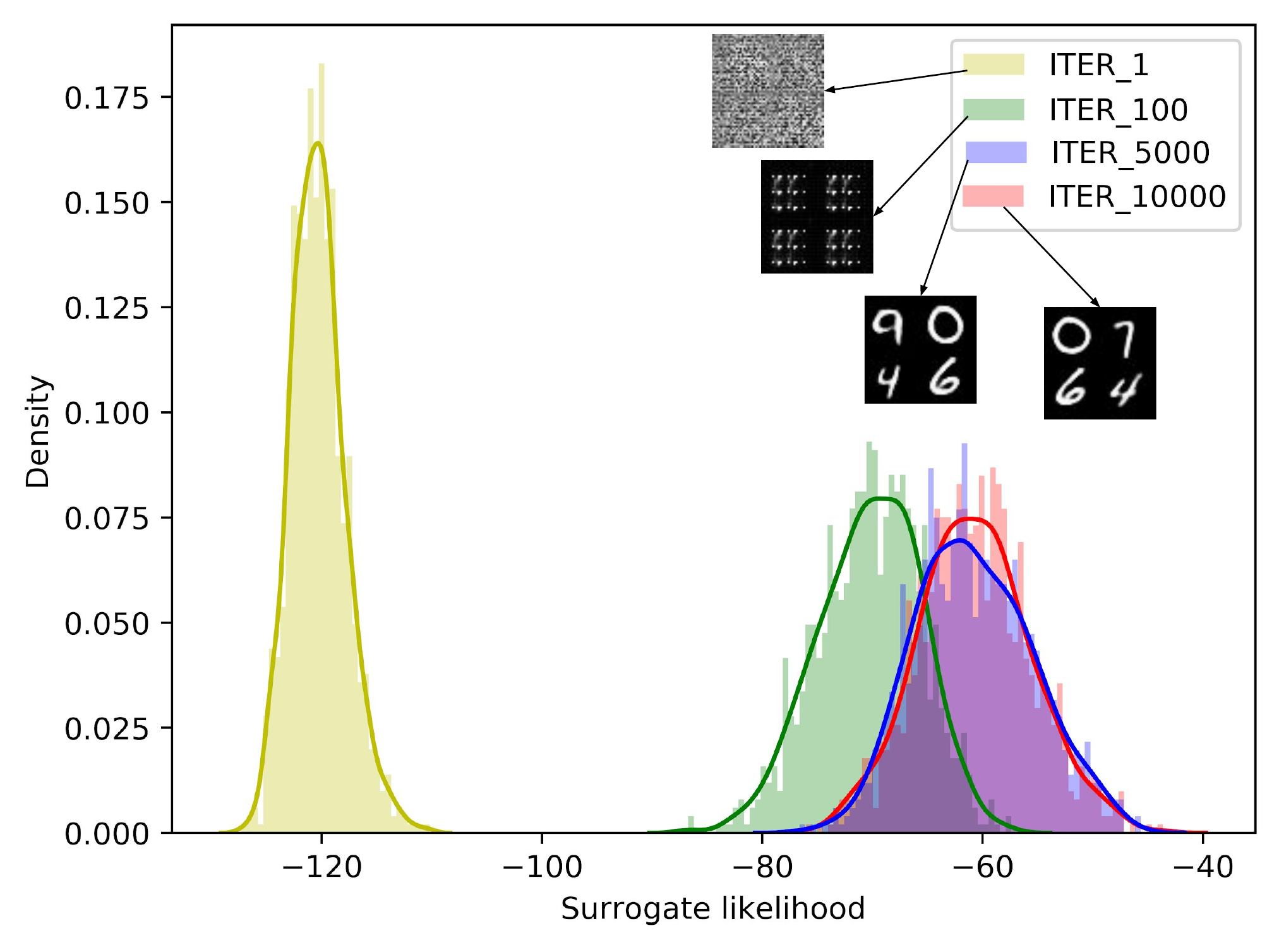}
\captionsetup{justification=centering}
\caption{}
\label{fig:LL_iterations} 
\end{subfigure}%
\begin{subfigure}{.44\textwidth}
\centering
\includegraphics[width=0.9\textwidth]{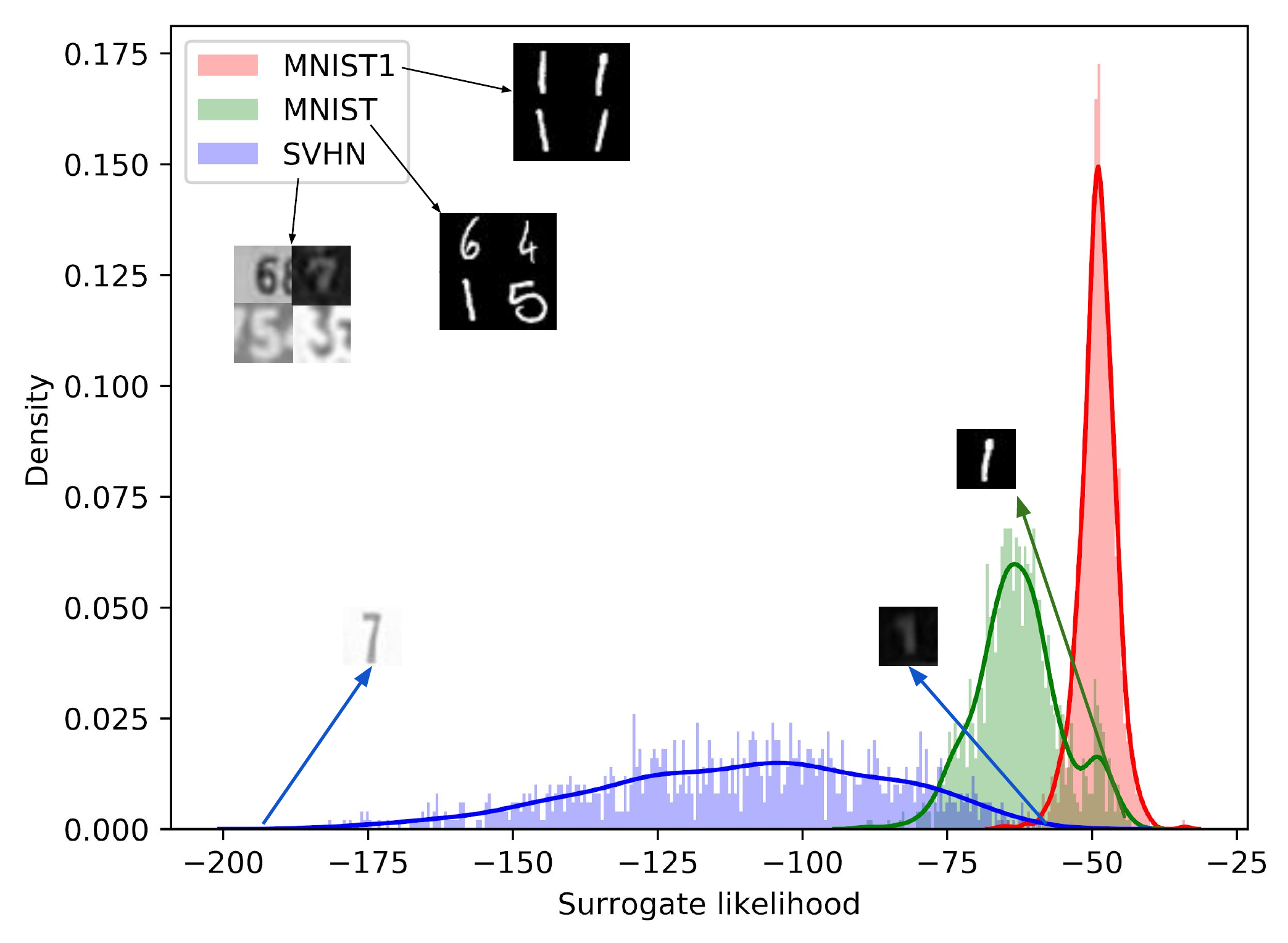}
\captionsetup{justification=centering}
\caption{}
\label{fig:LL_datasets} 
\end{subfigure}
\caption{(a) Distributions of surrogate sample likelihoods at different iterations of entropic WGAN's training using MNIST dataset. (b) Distributions of surrogate sample likelihoods of MNIST, MNIST-1 and SVHM datasets using a GAN trained on MNIST-1.}
\end{figure*}

In this section, we supplement our theoretical results with experimental validations.  One of the main objectives of our work is to provide a framework to compute sample likelihoods in GANs. Such likelihood statistics can then be used in several statistical inference applications that we discuss in Section \ref{sec:conclusion}. With a trained entropic WGAN, the likelihood of a test sample can be lower-bounded using Corollary~\ref{thm:likelihood}. Note that this likelihood estimate requires the discriminators $\bD_{1}$ and $\bD_{2}$ to be solved to optimality. In our implementation, we use the algorithm presented in \cite{sanjabi2018solving} to train the Entropic GAN. It has been proven ~\citep{sanjabi2018solving} that this algorithm leads to a good approximation of stationary solutions of Entropic GAN. We also discuss an approximate likelihood computation approach for un-regularized GANs in SM Section 4.

To obtain the surrogate likelihood estimates using Corollary~\ref{thm:likelihood}, we need to compute the density $\PP_{X|Y=\byn}^*(\bx)$. As shown in Lemma~\ref{lem:transport_plan}, WGAN with entropy regularization provides a closed-form solution to the conditional density of the latent variable (\eqref{eq:transport_plan}). When $\bG^*$ is injective, $\PP_{X|Y=\byn}^*(\bx)$ can be obtained from \eqref{eq:transport_plan} by change of variables. In general case, $\PP_{X|Y=\byn}^*(\bx)$ is not well defined as multiple $\bx$ can produce the same $\byn$. In this case, 
\begin{align} 
\PP_{\hY|Y=\byn}^*(\hby) &= \sum_{\bx| \bG^{*}(\bx)=\hby} \PP^*_{X|Y=\byn}(\bx). \label{eq:chain_rule} 
\end{align}
Also, from \eqref{eq:transport_plan}, we have
\begin{align}
\PP_{\hY|Y=\byn}^*(\hby) &= \sum_{\bx| \bG^{*}(\bx)=\hby} \PP_{X}(\bx) \exp{\left(v^*\left(\byn,\bG^{*}(\bx) \right)/\lambda\right)}. \label{eq:transport_plan2}
\end{align}
One solution (which may not be unique) that satisfies both \eqref{eq:chain_rule} and \eqref{eq:transport_plan2} is 
\begin{align}\label{eq:cond_den}
\PP_{X|Y=\byn}^*(\bx) = \PP_{X}(\bx) \exp{\left(v^*\left(\byn,G^{*}(\bx)\right)/\lambda\right)}.
\end{align}
Ideally, we would like to choose $\PP_{X|Y=\byn}^*(\bx)$, satisfying \eqref{eq:chain_rule} and \eqref{eq:transport_plan2} that maximizes the lower bound of Corollary~\ref{thm:likelihood}. But finding such a solution can be difficult in general. Instead we use \eqref{eq:cond_den} to evaluate the surrogate likelihoods of Corollary~\ref{thm:likelihood} (note that our results still hold in this case). In order to compute our proposed surrogate likelihood, we need to draw samples from the distribution $\PP_{X|Y=\byn}^*(\bx)$. One approach is to use a Markov chain Monte Carlo (MCMC) method to sample from this distribution. In our experiments, however, we found that MCMC demonstrates poor performance owing to the high dimensional nature of $X$. A similar issue with MCMC has been reported for VAEs in ~\cite{kingma2013auto}. Thus, we use a different estimator to compute the likelihood surrogate which provides a better exploration of the latent space. We present our sampling procedure in Algorithm 1 of Appendix. 

%
%\begin{figure}
%\includegraphics[width=0.5\textwidth]{}
%\centering
%\captionsetup{justification=centering}
%\caption{Empirical distribution of sample likelihood lower-bounds over different GAN training iterations using MNIST dataset}
%\label{fig:LL_iterations} 
%\end{figure}
%
%\begin{figure}
%\includegraphics[width=0.5\textwidth]{}
%\centering
%\captionsetup{justification=centering}
%\caption{Plot of distribution of sample likelihoods for different datasets}
%\label{fig:LL_datasets} 
%\end{figure}

\begin{figure*}
\includegraphics[width=0.7\textwidth]{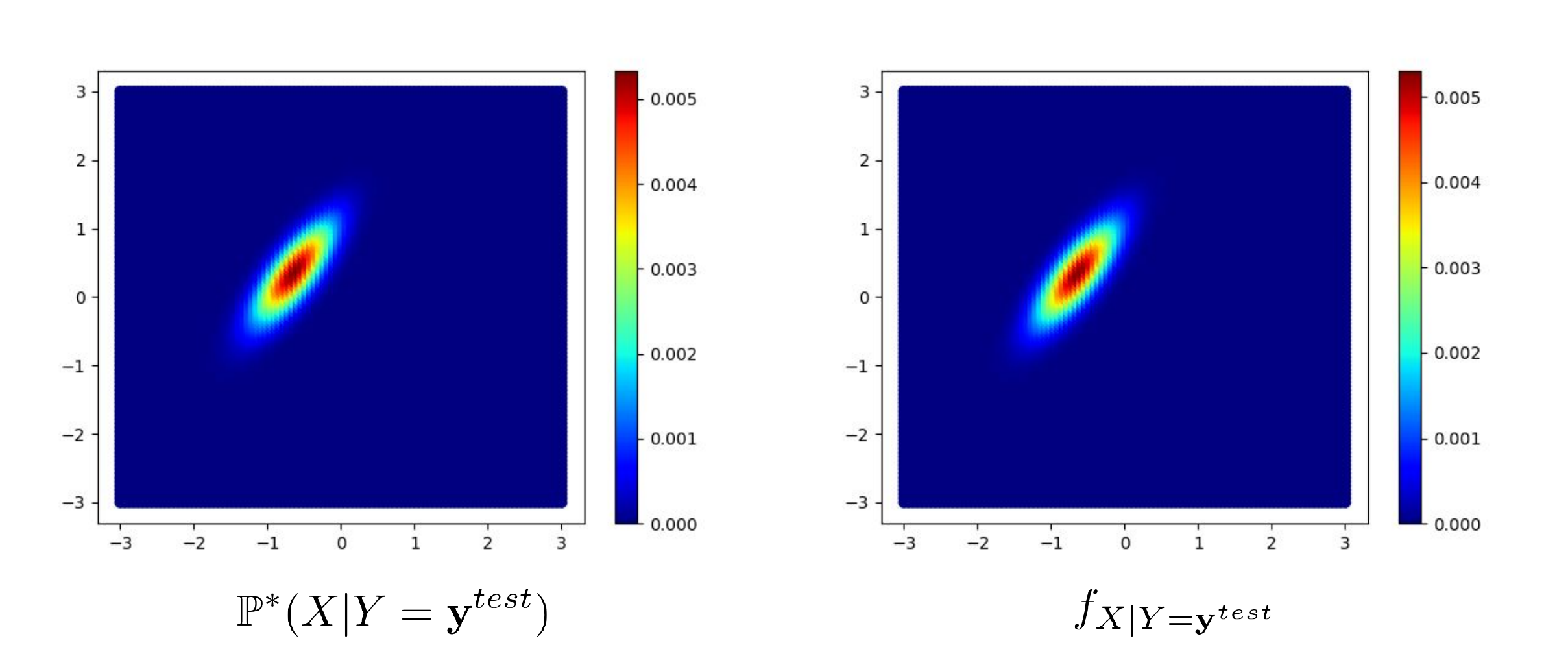}
\centering
%\captionsetup{justification=centering}
\caption{A visualization of density functions of $\PP_{X|Y=\byn}$ and $f_{X|Y=\byn}$ for a random two-dimensional $\byn$. Both distributions are very similar to one another making the approximation gap (i.e. $\text{KL}\left(\PP_{X|Y=\by^{test}}||f_{X|Y=\by^{test}}\right)$) very small. Our other experimental results presented in Table \ref{tab:tightness_table} are consistent with this result.}
\label{fig:density_plots} 
\end{figure*}

\subsection{Likelihood Evolution in GAN's Training}

In the experiments of this section, we study how sample likelihoods vary during GAN's training. An entropic WGAN is first trained on MNIST training set. Then, we randomly choose $1,000$ samples from MNIST test set to compute the surrogate likelihoods using Algorithm 1 at different training iterations. Surrogate likelihood computation requires solving $\bD_{1}$ and $\bD_{2}$ to optimality for a given $\bG$ (refer to Lemma~\ref{lem:transport_plan}), which might not be satisfied at the intermediate iterations of the training process. Therefore, before computing the surrogate likelihoods, discriminators $\bD_{1}$ and $\bD_{2}$ are updated for $100$ steps for a fixed $\bG$. We expect sample likelihoods to increase over training iterations as the quality of the generative model improves.% A proper surrogate likelihood function should capture this trend. 

Fig.~\ref{fig:LL_iterations} demonstrates the evolution of sample likelihood distributions at different training iterations of the entropic WGAN. Note that the constant in surrogate likelihood (Corollary~\ref{thm:likelihood}) was ignored for obtaining the plot since its inclusion would have only shifted every curve by the same offset. At iteration $1$, surrogate likelihood values are very low as GAN's generated images are merely random noise. The likelihood distribution shifts towards high values during the training and saturates beyond a point. The depicted likelihood values (after convergence) is roughly in the ballpark of the values reported by direct likelihood optimization methods (e.g., DRAW~\citep{gregor2015DRAW}, PixelRNN~\citep{oord2016pixelrnn}).

\subsection{Likelihood Comparison Across Different Datasets}

In this section, we perform experiments across different datasets. An entropic WGAN is first trained on a subset of samples from the MNIST dataset containing digit $1$ (which we call the MNIST-1 dataset). With this trained model, likelihood estimates are computed for (1) samples from the entire MNIST dataset, and (2) samples from the Street View House Numbers (SVHN) dataset~\citep{SVHN_dataset} (Fig.~\ref{fig:LL_datasets}). In each experiment, the likelihood estimates are computed for $1000$ samples. We note that highest likelihood estimates are obtained for samples from MNIST-1 dataset, the same dataset on which the GAN was trained. The likelihood distribution for the MNIST dataset is bimodal with one mode peaking inline with the MNIST-1 mode. Samples from this mode correspond to digit $1$ in the MNIST dataset. The other mode, which is the dominant one, contains the rest of the digits and has relatively low likelihood estimates. The SVHN dataset, on the other hand, has much smaller likelihoods as its distribution is significantly different than that of MNIST. Furthermore, we observe that the likelihood distribution of SVHN samples has a large spread (variance). This is because samples of the SVHN dataset is more diverse with varying backgrounds and styles than samples from MNIST. We note that SVHN samples with high likelihood estimates correspond to images that are similar to MNIST digits, while samples with low scores are different than MNIST samples. Details of this experiment are presented in Appendix. 
\footnote{Training code available at \url{https://github.com/yogeshbalaji/EntropicGANs_meet_VAEs}}

%\begin{figure}
%\includegraphics[width=0.5\textwidth]{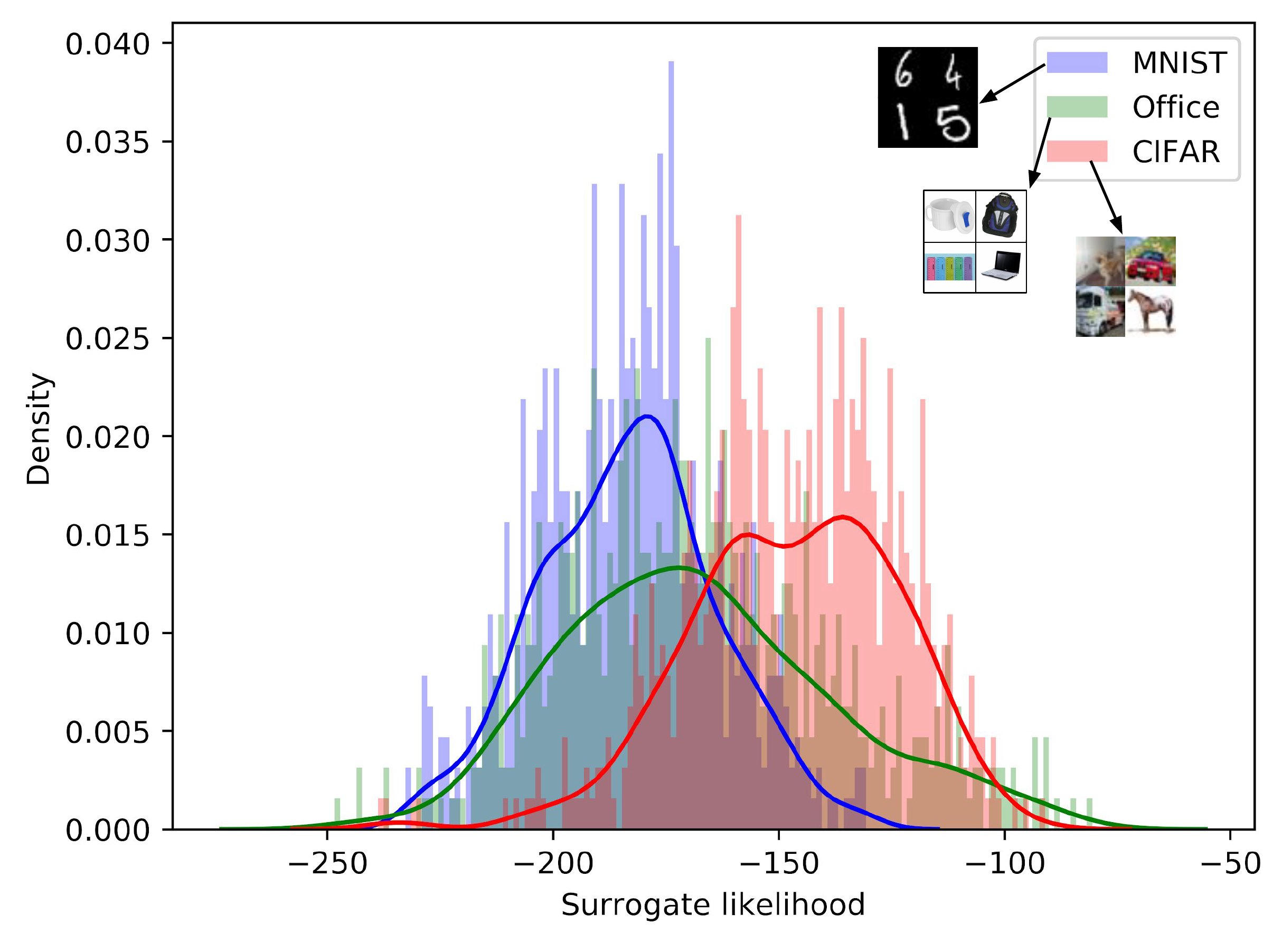}
%\centering
%\caption{ Empirical distribution of sample likelihood lower-bounds for different datasets computed using a model train on CIFAR-10 }
%\label{fig:LL_CIFAR} 
%\end{figure}

\begin{table}[!b]
\caption{The tightness of the entropic GAN lower bound. Approximation gaps are orders of magnitudes smaller than the surrogate log-likelihoods. Results are averaged over $100$ samples drawn from the underlying data distribution.}
\label{tab:tightness_table}
\centering
\begin{tabular}{|c | c | c |}
\hline
Data & Approximation & Surrogate \\   
dimension & gap & Log-Likelihood \\
\hline
\hline
$2$ & $9.3 \times 10^{-4}$ & $-4.15$  \\ \hline
$5$ & $4.7 \times 10^{-2}$ & $-15.35$  \\ \hline
$10$ & $6.2 \times 10^{-2}$ & $-46.3$  \\
\hline
\end{tabular}
\end{table}

\begin{table}[!b]
\caption{The tightness of the entropic GAN lower bound for non-linear generators.}
\label{tab:tightness_table_nonlinear}
\centering
\begin{tabular}{|c | c | c |}
\hline
Data & Exact  & Surrogate \\   
dimension & Log-Likelihood & Log-Likelihood \\
\hline
\hline
$5$ & $-16.38$ & $-17.94$  \\ \hline
$10$ & $-35.15$ & $-43.6$  \\ \hline
$20$ & $-58.04$ & $-66.58$ \\ \hline
$30$ & $-91.80$ & $-100.69$ \\ \hline
$64$ & $-203.46$ & $-217.52$  \\
\hline

\end{tabular}
\end{table}

\subsection{Tightness of the Variational Bound}\label{sec:tightness}
In Theorem~\ref{thm:main}, we have shown that the Entropic GAN objective maximizes a lower-bound on the average sample log-likelihoods. This result has the same flavor as variational lower bounds used in VAEs, thus providing a connection between these two areas. One drawback of VAEs in general is the lack of tightness analysis of the employed variational lower bounds. In this section, we aim to understand the tightness of the entropic GAN's variational lower bound for some generative models.

\subsubsection{Linear Generators}
Corollary \ref{thm:likelihood} shows that the entropic GAN lower bound is tight when $\text{KL}\left(\PP_{X|Y=\by}||f_{X|Y=\by}\right)$ approaches $0$. Quantifying this term can be useful for assessing the quality of the proposed likelihood surrogate function. We refer to this term as the approximation gap. 

Computing the approximation gap can be difficult in general as it requires evaluating $f_{X|Y=\by}$. Here we perform an experiment for linear generative models and a quadratic loss function (same setting of Corrolary~\ref{cor:linear_model}). Let the real data $Y$ be generated from the following underlying model
\begin{align*}
f_{Y|X=\bx} &\sim \mathcal{N}(\bG\bx, \lambda \bI)  \\
\text{where  } X &\sim \mathcal{N}(0, \bI)
\end{align*}
Using the Bayes rule, we have,
\begin{align*}
f_{X|\by^{test}} &\sim \mathcal{N}(\bR \by^{test}, \bI-\bR \bG) \\
\text{where  } \bR &= \bG^{T}(\bG \bG^{T} + \lambda \bI)^{-1}.
\end{align*}
Since we have a closed-form expression for $f_{X|Y}$, $\text{KL}\left(\PP_{X|Y=\by}||f_{X|Y=\by}\right)$ can be computed efficiently. 

The matrix $\bG$ to generate $Y$ is chosen randomly. Then, an entropic GAN with a linear generator and non-linear discriminators are trained on this dataset. $\PP_{X|Y=\by}$ is then computed using ~\eqref{eq:cond_den}. Table~\ref{tab:tightness_table} reports the average surrogate log-likelihood values and the average approximation gaps computed over $100$ samples drawn from the underlying data distribution. We observe that the approximation gap is orders of magnitudes smaller than the log-likelihood values.

Additionally, in Figure~\ref{fig:density_plots}, we demonstrate the density functions of $\PP_{X|Y=\by}$ and $f_{X|Y=\by}$ for a random $\by$ and a two-dimensional case ($r=2$) . In this figure, one can observe that both distributions are very similar to one another making the approximation gap very small.

Architecture and hyper-parameter details: For the generator network, we used 3 linear layers without any non-linearities ($2 \to 128 \to 128 \to 2$). Thus, it is an over-parameterized linear system. Over-parameterization was needed to improve convergence of the EntropicGAN training. The discriminator architecture (both $D_{1}$ and $D_{2}$) is a 2-layer MLP with ReLU non-linearities (2 $\to$ 128 $\to$ 128 $\to$ 1). $\lambda=0.1$ was used in all the experiments. Both generator and discriminator were trained using the Adam optimizer with a learning rate $10^{-6}$ and momentum $0.5$. The discriminators were trained for $10$ steps per generator iteration. Batch size of $512$ was used.

\subsubsection{Non-linear Generators}
In this part, we consider the case of non-linear generators. The approximation gap $\text{KL}\left(\PP_{X|Y=\by}||f_{X|Y=\by}\right)$ cannot be computed efficiently for non-linear generators as computing the optimal coupling $\PP_{X|Y=\by}$ is intractable. Instead, we demonstrate the tightness of the variational lower bound by comparing the exact data log-likelihood and the estimated lower-bound. As before, a $d-$dimensional Gaussian data distribution is used as the data distribution. The use of Gaussian distribution enables us to compute the exact data likelihood in closed-form. A table showing exact likelihood and the estimated lower-bound is shown in Table~\ref{tab:tightness_table_nonlinear}. We observe that the computed likelihood surrogate provides a good estimate to the exact data likelihood.

%%%%%%%%%%%%%%%%%%%%%%%

\section{Conclusion}\label{sec:conclusion}
In this paper, we have provided a statistical framework for a family of GANs. Our main result shows that the entropic GAN optimization can be viewed as maximization of a variational lower-bound on average sample log-likelihoods, an approach that VAEs are based upon. This result makes a connection between two most-popular generative models, namely GANs and VAEs. More importantly, our result constructs an explicit probability model for GANs that can be used to compute a lower-bound on sample likelihoods.  Our experimental results on various datasets demonstrate that this likelihood surrogate can be a good approximation of the true likelihood function. Although in this paper we mainly focus on understanding the behavior of the sample likelihood surrogate in different datasets, the proposed statistical framework of GANs can be used in various statistical inference applications. For example, our proposed likelihood surrogate can be used as a quantitative measure to evaluate the performance of different GAN architectures, it can be used to quantify the domain shifts, it can be used to select a proper generator class by balancing the bias term vs. variance, it can be used to detect outlier samples, it can be used in statistical tests such as hypothesis testing, etc. We leave exploring these directions for future work.

%%%%%%%%%%%%%%%%%%%%%%%

\section*{Acknowledgement}

The Authors acknowledge support of the following organisations for sponsoring this work: (1) MURI from the Army Research Office under the Grant No. W911NF-17-1-0304,  (2) Capital One Services LLC.

\bibliography{refs}
\bibliographystyle{icml2019}

\clearpage

\section*{Appendix}
\appendix

\section{Proof of Theorem 1}\label{sec:proofs}
Using the Baye's rule, one can compute the $\log$-likelihood of an observed sample $\by$ as follows:
\begin{align}\label{eq:bayes}
\log f_Y(\by)=&\log f_{Y|X=\bx} (\by)+\log f_X(\bx) -\log f_{X|Y=\by} (\bx)\\
=&\log C-\ell(\by,G(\bx))- \log \sqrt{2\pi}  \nonumber \\
	&-\frac{\|\bx\|^2}{2}-\log f_{X|Y=\by} (\bx),\nonumber
\end{align}
where the second step follows from Equation 2.4 (main paper).

Consider a joint density function $\PP_{X,Y}$ such that its marginal distributions match $\PP_X$ and $\PP_Y$. Note that the equation \ref{eq:bayes} is true for every $\bx$. Thus, we can take the expectation of both sides with respect to a distribution $\PP_{X|Y=\by}$. This leads to the following equation:

\begin{align}
\log f_Y(\by)=&\EE_{\PP_{X|Y=\by}}\Big[-\ell(\by,\bG(\bx))/\lambda+\log C-\frac{1}{2} \log 2\pi \label{eq:bayes2} \\
	&- \frac{\|\bx\|^2}{2}-\log f_{X|Y=\by} (\bx)\Big]\\
=&\EE_{\PP_{X|Y=\by}}\bigg[-\ell(\by,\bG(\bx))/\lambda+\log C- \frac{1}{2} \log 2\pi \nonumber \\
	&- \frac{\|\bx\|^2}{2}-\log f_{X|Y=\by} (\bx) +\log\left(\PP_{X|Y=\by} (\bx)\right) \nonumber\\
&-\log \left(\PP_{X|Y=\by} (\bx)\right)\bigg]\nonumber\\
=& -\EE_{\PP_{X|Y=\by}}\left[\ell(\by,\bG(\bx))/\lambda\right]- \frac{1}{2} \log 2\pi \nonumber \\
& +\log C + \EE_{\PP_{X|Y=\by}}\left[- \frac{\|\bx\|^2}{2}\right]\nonumber\\
&+\text{KL}\left(\PP_{X|Y=\by}||f_{X|Y=\by}\right)+H\left(\PP_{X|Y=\by}\right), \label{eq:corr2}
\end{align}
where $H(.)$ is the Shannon-entropy function. Please note that Corrolary 2 follows from Equation~\eqref{eq:corr2}.

Next we take the expectation of both sides with respect to $\PP_Y$:
\begin{align}\label{eq:bayes3}
\EE\left[\log f_Y(Y)\right]=& -\frac{1}{\lambda}\EE_{\PP_{X,Y}}\left[\ell(\by,G(\bx))\right]- \frac{1}{2} \log 2\pi \nonumber \\
	&+\log C +\EE_{f_{X}}\left[- \frac{\|\bx\|^2}{2}\right]\\
&+\EE_{\PP_{Y}}\left[\text{KL}\left(\PP_{X|Y=\by}||f_{X|Y=\by}\right)\right] \nonumber \\
&+H\left(\PP_{X,Y}\right)-H\left(\PP_Y\right).\nonumber
\end{align}
Here, we replaced the expectation over $\PP_X$ with the expectation over $f_X$ since one can generate an arbitrarily large number of samples from the generator. Since the KL divergence is always non-negative, we have
\begin{align}\label{eq:lower-bound2}
\EE\left[\log f_Y(Y)\right]\geq & -\frac{1}{\lambda}\left\{\EE_{\PP_{X,Y}}\left[\ell(\by,\bG(\bx))\right]-\lambda H\left(\PP_{X,Y}\right)\right\} \nonumber \\
	&+\log C-\log (m)-\frac{r+\log 2\pi}{2}
\end{align}
Moreover, using the data processing inequality, we have $H(\PP_{X,Y})\geq H(\PP_{\bG(X),Y})$ \citep{cover2012elements}. Thus, 

\begin{align}\label{eq:lower-bound3}
\underbrace{\EE\left[\log f_Y(Y)\right]}_{\text{sample likelihood}}\geq & -\frac{1}{\lambda}\underbrace{\left\{\EE_{\PP_{X,Y}}\left[\ell(\by,\bG(\bx))\right]-\lambda H\left(\PP_{Y,\hY}\right)\right\}}_{\text{GAN objective with entropy regularizer}} \nonumber \\
& +\log C-\log (m)-\frac{r+\log 2\pi}{2}
\end{align}

This inequality is true for every $\PP_{X,Y}$ satisfying the marginal conditions. Thus, similar to VAEs, we can pick $\PP_{X,Y}$ to maximize the lower bound on average sample $\log$-likelihoods. This leads to the entropic GAN optimization 2.3 (main paper).

\begin{algorithm}
\caption{Estimating sample likelihoods in GANs}
\label{alg:algo}
\begin{algorithmic}[1]
\STATE Sample $N$ points $\bx_{i} \overset{i.i.d}{\sim} P_{X}(\bx) $
\STATE Compute $u_{i} := \PP_{X}(\bx_{i}) \exp{\left(v^*\left(\byn,G^{*}(\bx_{i})\right)/\lambda\right)}$
\STATE Normalize to get probabilities $p_{i} = \frac{u_{i}}{\sum_{i=1}^{N}u_{i}}$
\STATE Compute $L = -\frac{1}{\lambda} [ \sum_{i=1}^{N} p_{i}l(\byn, G^{*}(\bx_{i})) + \lambda \sum_{i=1}^{N}p_{i}\log p_{i} ] - \sum_{i=1}^{N} p_{i}\frac{\| \bx_{i} \|^{2}}{2}$ 
\STATE Return $L$
\end{algorithmic}
\end{algorithm}

\section{Optimal Coupling for W2GAN}\label{sec:coupling}

Optimal coupling $\PP_{Y,\hY}^*$ for the W2GAN (quadratic GAN \citep{feizi2017understanding}) can be computed using the gradient of the optimal discriminator \citep{villani2008optimal} as follows.

\begin{lemma}\label{lem:det-coupling}
Let $\PP_Y$ be absolutely continuous whose support contained in a convex set in $\mathbb{R}^d$. Let $\bD^{opt}$ be the optimal discriminator for a given generator $\bG$ in W2GAN. This solution is unique. Moreover, we have
\begin{align}\label{eq:deterministic-mapping}
\hY \stackrel{\text{dist}}{=}Y- \grad \bD^{opt}(Y),
\end{align}
where $\stackrel{\text{dist}}{=}$ means matching distributions.
\end{lemma}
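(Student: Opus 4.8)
\textbf{Proof plan for Lemma \ref{lem:det-coupling}.}

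The plan is to invoke the classical duality and structure theory for the quadratic-cost optimal transport problem (Brenier's theorem, as presented in Villani). First I would recall that the W2GAN objective, for a fixed generator $\bG$, is exactly the squared-Wasserstein distance $W_2(\PP_Y,\PP_{\hY})$ where $\PP_{\hY}$ is the pushforward of $\PP_X$ under $\bG$. In the Kantorovich dual \eqref{opt:gan-gen-loss-dual} specialized to $\ell(\by,\hby)=\tfrac12\|\by-\hby\|^2$, after the standard change of variables $\bD(\by)=\tfrac12\|\by\|^2-\varphi(\by)$ the $\ell$-convexity constraint on $\bD$ becomes convexity of $\varphi$, and the $\ell$-conjugate $\bD^{(\ell)}$ corresponds to the usual convex (Legendre) conjugate $\varphi^*$. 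So the optimal discriminator $\bD^{opt}$ is associated with a convex potential $\varphi$, and $\grad \bD^{opt}(\by)=\by-\grad\varphi(\by)$.

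Next I would apply Brenier's theorem: since $\PP_Y$ is absolutely continuous with support in a convex set, the optimal transport map from $\PP_Y$ to $\PP_{\hY}$ exists, is $\PP_Y$-a.e. unique, and equals $\grad\varphi$ for the (essentially unique) convex potential $\varphi$ solving the dual. Uniqueness of $\bD^{opt}$ then follows from uniqueness of this convex potential up to an additive constant, which is pinned down on the (full-dimensional) support of $\PP_Y$. The optimal coupling $\PP_{Y,\hY}^*$ is the deterministic plan $(\mathrm{id},\grad\varphi)_\#\PP_Y$, i.e. $\hY=\grad\varphi(Y)$ in distribution. Substituting $\grad\varphi(\by)=\by-\grad\bD^{opt}(\by)$ gives $\hY\stackrel{\text{dist}}{=}Y-\grad\bD^{opt}(Y)$, which is exactly \eqref{eq:deterministic-mapping}.

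The main obstacle, and the step requiring the most care, is the translation between the GAN-style dual \eqref{opt:gan-gen-loss-dual} (phrased with the $\ell$-conjugate $\bD^{(\ell)}$ and $\ell$-convex $\bD$) and the classical convex-analytic dual for quadratic cost, together with verifying the regularity hypotheses under which Brenier's theorem yields a genuine map rather than merely a subgradient relation — in particular that $\grad\varphi$ (hence $\grad\bD^{opt}$) is well-defined $\PP_Y$-almost everywhere, which uses absolute continuity of $\PP_Y$ and Rademacher's theorem on differentiability of convex functions. I would cite Villani for these facts rather than reprove them, and note that this identification is exactly the standard one also used in \citep{feizi2017understanding,chernozhukov2017monge}. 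A minor additional point is to confirm that the generator being fixed makes $\PP_{\hY}$ a fixed target measure so that the inner maximization over $\bD$ is precisely the $W_2$ dual, which is what licenses the appeal to Brenier.
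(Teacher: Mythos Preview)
Your proposal is correct and follows exactly the approach the paper takes: the paper does not give an explicit proof of this lemma but simply states it as a consequence of Brenier's theorem, citing \citep{villani2008optimal} (and, in the earlier in-text version, also \citep{feizi2017understanding,chernozhukov2017monge}). Your outline --- reduce the W2GAN inner problem to the quadratic-cost Kantorovich dual, perform the change of variables $\bD(\by)=\tfrac12\|\by\|^2-\varphi(\by)$ to identify $\ell$-convexity with ordinary convexity, and then invoke Brenier to obtain the $\PP_Y$-a.e.\ unique transport map $\grad\varphi=\mathrm{id}-\grad\bD^{opt}$ --- is precisely the standard derivation behind those citations, so there is nothing to add beyond what you wrote.
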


%interpret encoder decoder 

%simulated G linear, likelihood vs apx likelihood 

%G nonlinear, image dataset, compare likelihood 

\section{Sinkhorn Loss}
In practice, it has been observed that a slightly modified version of the entropic GAN demonstrates improved computational properties \citep{genevay2017sinkhorn,sanjabi2018solving}. We explain this modification in this section. Let
\begin{align}\label{opt:GAN-KL}
W_{\ell,\lambda}(\PP_Y,\PP_{\hY}):=\min_{\PP_{Y,\hY}}~ \EE\left[\ell(Y,\hY)\right]+ \lambda \KL\left(\PP_{Y,\hY} \right),
\end{align}
where $\KL(.||.)$ is the Kullback–Leibler divergence. Note that the objective of this optimization differs from that of the entropic GAN optimization 2.3 (main paper) by a constant term $\lambda H(\PP_Y)+\lambda H(\PP_{\hY})$. A sinkhorn distance function is then defined as \citep{genevay2017sinkhorn}:
\begin{align}\label{def:sinkhorn}
\bar{W}_{\ell,\lambda}(\PP_Y,\PP_{\hY}):=&2 W_{\ell,\lambda}(\PP_Y,\PP_{\hY})- W_{\ell,\lambda}(\PP_Y,\PP_{Y}) \nonumber \\
&- W_{\ell,\lambda}(\PP_{\hY},\PP_{\hY}).
\end{align}
$\bar{W}$ is called the Sinkhorn loss function. Reference \cite{genevay2017sinkhorn} has shown that as $\lambda\to 0$, $\bar{W}_{\ell,\lambda}(\PP_Y,\PP_{\hY})$ approaches $W_{\ell,\lambda}(\PP_Y,\PP_{\hY})$. For a general $\lambda$, we have the following upper and lower bounds: 
\begin{lemma}\label{lem:sinkhorn-bound}
For a given $\lambda>0$, we have
\begin{align}
\bar{W}_{\ell,\lambda}(\PP_Y,\PP_{\hY}) \leq 2 W_{\ell,\lambda}(\PP_Y,\PP_{\hY}) &\leq \bar{W}_{\ell,\lambda}(\PP_Y,\PP_{\hY})  \\
&+\lambda H(\PP_Y) +\lambda H(\PP_{\hY}). \nonumber
\end{align}
\end{lemma}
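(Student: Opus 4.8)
The plan is to derive both inequalities directly from the definitions in \eqref{opt:GAN-KL} and \eqref{def:sinkhorn}, using only the non-negativity of the KL divergence and the fact that $W_{\ell,\lambda}(\PP_Y,\PP_Y)$ is obtained by minimizing over couplings of $\PP_Y$ with itself. First I would record the trivial but key observation that for any probability measure $\PP_Z$ we have $W_{\ell,\lambda}(\PP_Z,\PP_Z)\ge 0$: indeed $\EE[\ell(Z,Z')]\ge 0$ for any admissible coupling since $\ell\ge 0$ (the loss is a cost/distance), and $\KL(\PP_{Z,Z'}\Vert \PP_Z\times\PP_Z)\ge 0$ always; hence each term in the minimization defining $W_{\ell,\lambda}(\PP_Z,\PP_Z)$ is non-negative, so the minimum is. Here I am reading $\KL(\PP_{Y,\hY})$ in \eqref{opt:GAN-KL} as $\KL(\PP_{Y,\hY}\Vert\PP_Y\times\PP_{\hY})$, consistent with the footnote relating \eqref{opt:GAN-KL} to the entropic GAN objective via the constant $\lambda H(\PP_Y)+\lambda H(\PP_{\hY})$.

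For the left inequality $\bar W_{\ell,\lambda}(\PP_Y,\PP_{\hY})\le 2W_{\ell,\lambda}(\PP_Y,\PP_{\hY})$: rearranging \eqref{def:sinkhorn} gives $2W_{\ell,\lambda}(\PP_Y,\PP_{\hY})-\bar W_{\ell,\lambda}(\PP_Y,\PP_{\hY}) = W_{\ell,\lambda}(\PP_Y,\PP_Y)+W_{\ell,\lambda}(\PP_{\hY},\PP_{\hY})$, which is $\ge 0$ by the observation above. For the right inequality $2W_{\ell,\lambda}(\PP_Y,\PP_{\hY})\le \bar W_{\ell,\lambda}(\PP_Y,\PP_{\hY})+\lambda H(\PP_Y)+\lambda H(\PP_{\hY})$: again using the rearrangement, this is equivalent to $W_{\ell,\lambda}(\PP_Y,\PP_Y)+W_{\ell,\lambda}(\PP_{\hY},\PP_{\hY})\le \lambda H(\PP_Y)+\lambda H(\PP_{\hY})$, so it suffices to show $W_{\ell,\lambda}(\PP_Z,\PP_Z)\le \lambda H(\PP_Z)$ for each $Z\in\{Y,\hY\}$. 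To see this I would plug the independent coupling $\PP_{Z,Z'}=\PP_Z\times\PP_Z$ into the minimization defining $W_{\ell,\lambda}(\PP_Z,\PP_Z)$; this choice makes the KL term vanish, leaving the bound $W_{\ell,\lambda}(\PP_Z,\PP_Z)\le \EE_{\PP_Z\times\PP_Z}[\ell(Z,Z')] = \EE_{\PP_Z\times\PP_Z}[h(Z-Z')]$ by shift-invariance of $\ell$. This, however, is not obviously $\le\lambda H(\PP_Z)$, so the naive independent-coupling bound is too weak for the loss part.

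The main obstacle, therefore, is controlling the cost term in $W_{\ell,\lambda}(\PP_Z,\PP_Z)$ and relating it to the entropy $\lambda H(\PP_Z)$; the cleanest route is not the independent coupling but the \emph{diagonal} coupling concentrated on $\{Z=Z'\}$, for which $\EE[\ell(Z,Z')]=\EE[h(0)]=0$ (using $h(0)=0$, which holds for the standard $L_2$ and quadratic losses), but then the KL term $\KL(\PP_{Z,Z}\Vert\PP_Z\times\PP_Z)$ becomes $H(\PP_Z)$ (it is the mutual information of a perfectly correlated pair), giving exactly $W_{\ell,\lambda}(\PP_Z,\PP_Z)\le 0+\lambda H(\PP_Z)$. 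I would present both coupling choices and take whichever is valid in the working discrete/empirical setting of the paper (where $\PP_Y,\PP_{\hY}$ are empirical measures, so $H$ is the discrete entropy and the diagonal coupling is a legitimate discrete joint distribution with $\KL(\PP_{Z,Z}\Vert\PP_Z\times\PP_Z)=H(\PP_Z)$). Combining: $0\le W_{\ell,\lambda}(\PP_Y,\PP_Y)\le\lambda H(\PP_Y)$ and likewise for $\PP_{\hY}$, substituted into the rearranged \eqref{def:sinkhorn}, yields both displayed inequalities simultaneously, completing the proof.
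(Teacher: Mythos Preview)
Your argument is correct and matches the paper's proof: both inequalities follow from rearranging \eqref{def:sinkhorn} and bounding $W_{\ell,\lambda}(\PP_Z,\PP_Z)$ from below by $0$ (non-negativity of $\ell$ and $\KL$) and from above by $\lambda H(\PP_Z)$ via the diagonal/identity coupling, exactly as the paper does. Your detour through the independent coupling is unnecessary---you can drop it and go straight to the identity coupling, which is what the paper invokes in one line; also note the paper's proof writes $H(\PP_Y)$ where $\lambda H(\PP_Y)$ is meant, so your version is actually cleaner on that point.
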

\begin{proof}
From the definition \eqref{def:sinkhorn}, we have $W_{\ell,\lambda}(\PP_Y,\PP_{\hY}): \geq \bar{W}_{\ell,\lambda}(\PP_Y,\PP_{\hY})/2$. Moreover, since $W_{\ell,\lambda}(\PP_Y,\PP_Y)\leq H(\PP_Y)$ (this can be seen by using an identity coupling as a feasible solution for optimization \eqref{opt:GAN-KL}) and similarly $W_{\ell,\lambda}(\PP_{\hY},\PP_{\hY})\leq H(\PP_{\hY})$, we have $W_{\ell,\lambda}(\PP_Y,\PP_{\hY}) \leq \bar{W}_{\ell,\lambda}(\PP_Y,\PP_{\hY})/2+\lambda/2 H(\PP_Y) +\lambda/2 H(\PP_{\hY})$. 
\end{proof}
Since $H(\PP_Y) + H(\PP_{\hY})$ is constant in our setup, optimizing the GAN with the Sinkhorn loss is equivalent to optimizing the entropic GAN. So, our likelihood estimation framework can be used with models trained using Sinkhorn loss as well. This is particularly important from a practical standpoint as training models with Sinkhorn loss tends to be more stable in practice.

 % The former may lead to better computational aspects in some datasets.
 %We use the algorithm presented in \cite{sanjabi2018solving} to compute a solution for the GAN optimization with the Sinkhorn loss.

%\section{Likelihood computation of MNIST dataset}
%
%In this section, we use our framework to compute sample likelihoods of MNIST dataset, and compare it with direct likelihood optimization models. For this experiment, we use the same network architecture and optimization parameters as Section 4.1 of the main paper. First, an EntropicGAN model is trained on Binary version of MNIST training set similar to \cite{oord2016pixelrnn}, and the negative log-likelihood values (in nats) is computed on the test set. The obtained values are reported in Table~\ref{tab:NLL_MNIST}. We observe that our lower-bound is on-par with the compared methods.
%
%\begin{table}[!b]
%\caption{Negative log likelihood values on Binarized MNIST dataset}
%\label{tab:NLL_MNIST}
%\centering
%\begin{tabular}{|c | c |}
%\hline
%Method & Negative Log-Likelihood  \\
%& (in nats) \\   
%\hline
%\hline
%DRAW~\citep{gregor2015DRAW} & $\leq 80.97$ \\
%PixelRNN~\cite{oord2016pixelrnn} & $79.2$ \\
%EntropicGAN (ours) & $ \leq 85.63$ \\
%
%\hline
%\end{tabular}
%\end{table}
%

\section{Approximate Likelihood Computation in Un-regularized GANs}\label{sec:apx-likelihood}
Most standard GAN architectures do not have the entropy regularization. Likelihood lower bounds of Theorem 1 and Corollary 2 hold even for those GANs as long as we obtain the optimal coupling $\PP_{Y,\hY}^*$ in addition to the optimal generator $\bG^*$ from GAN's training. Computation of optimal coupling $\PP_{Y,\hY}^*$ from the dual formulation of OT GAN can be done when the loss function is quadratic \citep{feizi2017understanding}. In this case, the gradient of the optimal discriminator provides the optimal coupling between $Y$ and $\hY$ \citep{villani2008optimal} (see Lemma 2 in Appendix).

For a general GAN architecture, however, the exact computation of optimal coupling $\PP_{Y,\hY}^*$ may be difficult. One sensible approximation is to couple $Y=\byn$ with a single latent sample $\tilde{\bx}$ (we are assuming the conditional distribution $\PP_{X|Y=\byn}^*$ is an impulse function). To compute $\tilde{\bx}$ corresponding to a $\byn$, we sample $k$ latent samples $\{\bx'_{i} \}_{i=1}^{k}$ and select the $\bx'_{i}$ whose $\bG^{*}(\bx'_{i})$ is closest to $\byn$. This heuristic takes into account both the likelihood of the latent variable as well as the distance between $\byn$ and the model (similarly to Eq 3.7). We can then use Corollary 2 to approximate sample likelihoods for various GAN architectures.

We use this approach to compute likelihood estimates for CIFAR-10~\citep{CIFAR_dataset} and LSUN-Bedrooms~\citep{LSUN_dataset} datasets. For CIFAR-10, we train DCGAN while for LSUN, we train WGAN. Fig.~\ref{fig:LL_CIFAR} demonstrates sample likelihood estimates of different datasets using a GAN trained on CIFAR-10. Likelihoods assigned to samples from MNIST and Office datasets are lower than that of the CIFAR dataset. Samples from the Office dataset, however, are assigned to higher likelihood values than MNIST samples. We note that the Office dataset is indeed more similar to the CIFAR dataset than MNIST. A similar experiment has been repeated for LSUN-Bedrooms~\citep{LSUN_dataset} dataset. We observe similar performance trends in this experiment (Fig.~\ref{fig:LL_LSUN}). 

\begin{figure*}
\centering
\begin{subfigure}{.44\textwidth}
\centering
\includegraphics[width=0.9\textwidth]{figs/LL_plot_cifar.pdf}
\captionsetup{justification=centering}
\caption{}
\label{fig:LL_CIFAR} 
\end{subfigure}%
\begin{subfigure}{.44\textwidth}
\centering
\includegraphics[width=0.9\textwidth]{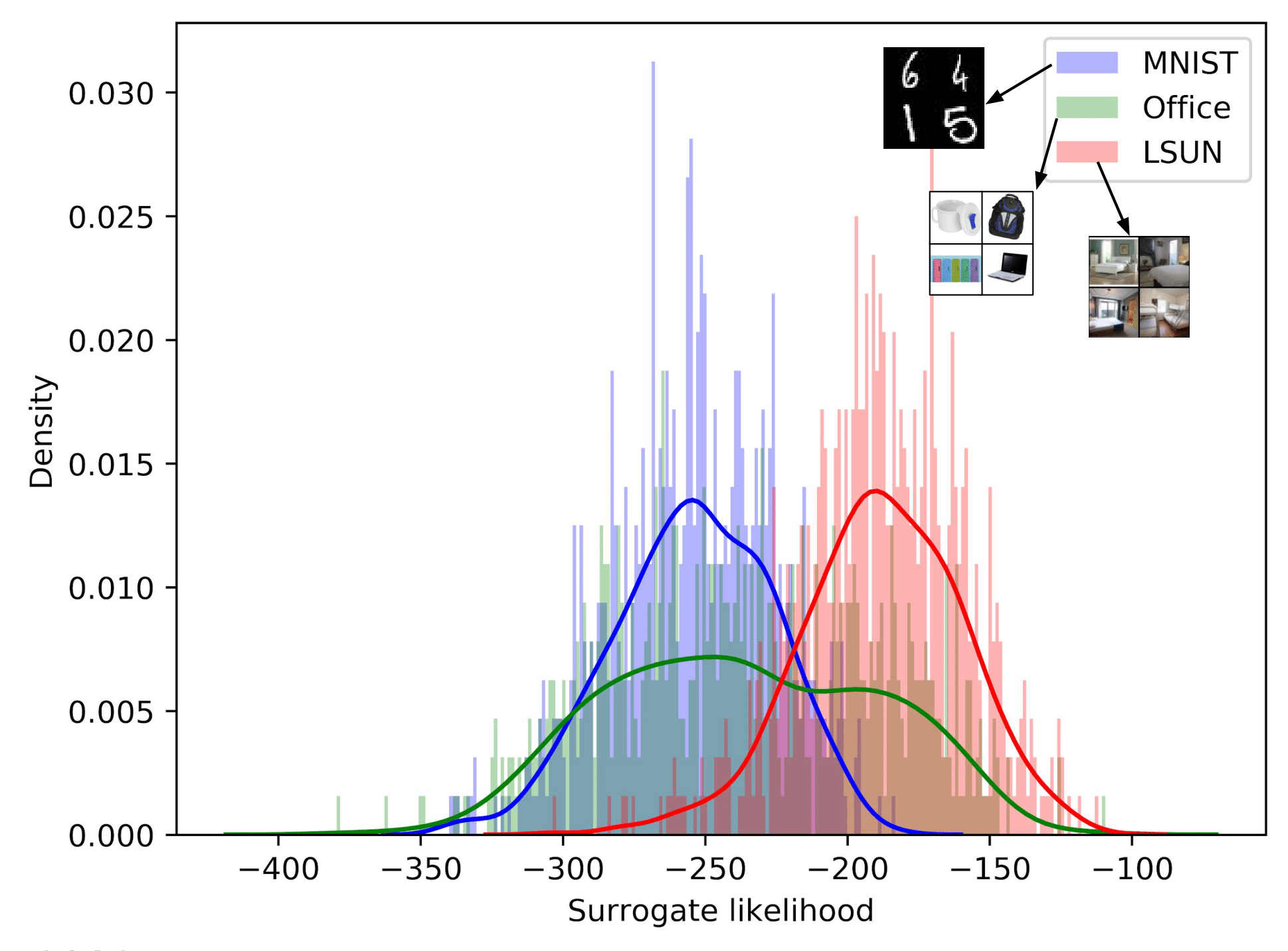}
\captionsetup{justification=centering}
\caption{}
\label{fig:LL_LSUN} 
\end{subfigure}
\caption{(a) Sample likelihood estimates of MNIST, Office and CIFAR datasets using a GAN trained on the CIFAR dataset. (b) Sample likelihood estimates of MNIST, Office and LSUN datasets using a GAN trained on the LSUN dataset.}
\end{figure*}

\section{Training Entropic GANs}
\label{app:training_details}
In this section, we discuss how WGANs with entropic regularization is trained. As discussed in Section 3 (main paper), the dual of the entropic GAN formulation can be written as 

\begin{align*}
\min_{\bG\in\cG}~ \max_{\bD_1,\bD_2}~&\EE\left[\bD_1(Y)\right]-\EE\left[\bD_2(\bG(X))\right] \nonumber \\
	&-\lambda\EE_{\PP_Y\times \PP_{\hY}}\left[\exp\left( v(\by,\hby)/\lambda \right)  \right],
\end{align*}
where 
\begin{align*}
v(\by,\hby):=\bD_1(\by)-\bD_2(\hby)-\ell(\by,\hby).
\end{align*}

We can optimize this min-max problem using alternating optimization. A better approach would be to take into account the smoothness introduced in the problem due to the entropic regularizer, and solve the generator problem to stationarity using first-order methods. Please refer to ~\cite{sanjabi2018solving} for more details. In all our experiments, we use Algorithm 1 of ~\cite{sanjabi2018solving} to train our GAN model.

\subsection{GAN's Training on MNIST}
MNIST dataset constains $28\times 28$ grayscale images. As a pre-processing step, all images were resized in the range $[0, 1]$. The Discriminator and the Generator architectures used in our experiments are given in Tables.~\ref{tab:arch_gen},\ref{tab:arch_disc}. Note that the dual formulation of GANs employ two discriminators - $D_{1}$ and $D_{2}$, and we use the same architecture for both. The hyperparameter details are given in Table~\ref{tab:hyperparams_MNIST}. Some sample generations are shown in Fig.~\ref{fig:mnist_gen}

\begin{table}[!b]
\caption{Generator architecture}
\label{tab:arch_gen}
\centering
\begin{tabular}{c c c}
\hline
Layer & Output size & Filters \\   
\hline
\hline
Input					&							$128$								& -\\
\hline
Fully connected & 							$4.4.256$ 							& $128 \to 256$ \\
Reshape & 									$256 \times 4 \times 4$ 		& - \\
BatchNorm+ReLU	&						$256 \times 4 \times 4$		& - \\
\hline
Deconv2d ($5 \times 5$, str $2$)  & 	$128 \times 8 \times 8$		& $256 \to 128$ \\
BatchNorm+ReLU	&						$128 \times 8 \times 8$		& - \\
Remove border row and col. &	$128 \times 7 \times 7$		& - \\
\hline
Deconv2d ($5 \times 5$, str $2$) & 	$64 \times 14 \times 14$	& $128 \to 64$ \\
BatchNorm+ReLU	&						$128 \times 8 \times 8$		& - \\
\hline
Deconv2d ($5 \times 5$, str $2$) & 	$1 \times 28 \times 28$		& $64 \to 1$ \\
Sigmoid	&									$1 \times 28 \times 28$		& - \\
\hline

\end{tabular}
\end{table}

\begin{table}[!b]
\caption{Discriminator architecture}
\label{tab:arch_disc}
\centering
\begin{tabular}{c c c}
\hline
Layer & Output size & Filters \\   
\hline
\hline
Input 			&								$1 \times 28 \times 28$		& - \\
\hline
Conv2D($5 \times 5$, str $2$)	  & 	$32 \times 14 \times 14$ 	& $1 \to 32$ \\
LeakyReLU($0.2$)	&						$32 \times 14 \times 14$		& - \\
\hline
Conv2D($5 \times 5$, str $2$)	  & 	$64 \times 7 \times 7$ 	& 	$32 \to 64$ \\
LeakyReLU($0.2$)	&						$64 \times 7 \times 7$		& - \\
\hline
Conv2d ($5 \times 5$, str $2$)	&	$128 \times 4 \times 4$	& $64 \to 128$ \\
LeakyRelU($0.2$)		&					$128 \times 4 \times 4$		& - \\
Reshape					&					$128.4.4$							& - \\
\hline
Fully connected			&					$1$									& $2048 \to 1$ \\
\hline

\end{tabular}
\end{table}

\begin{table}[!b]
\caption{Hyper-parameter details for MNIST experiment}
\label{tab:hyperparams_MNIST}
\centering
\begin{tabular}{c c}
\hline
Parameter & Config \\   
\hline
\hline
$\lambda$ & $5$ \\
Generator learning rate & $0.0002$ \\
Discriminator learning rate & $0.0002$ \\
Batch size & $100$ \\
Optimizer & Adam \\
Optimizer params & $\beta_{1} = 0.5$, $\beta_{2} = 0.9$\\
Number of critic iters / gen iter & 5 \\
Number of training iterations & 10000 \\
\hline

\end{tabular}
\end{table}

\begin{figure*}
\includegraphics[width=0.55\textwidth]{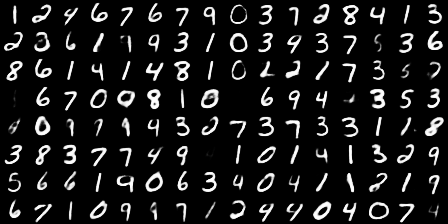}
\centering
\captionsetup{justification=centering}
\caption{Samples generated by Entropic GAN trained on MNIST}
\label{fig:mnist_gen} 
\end{figure*}

\begin{figure*}
\includegraphics[width=0.55\textwidth]{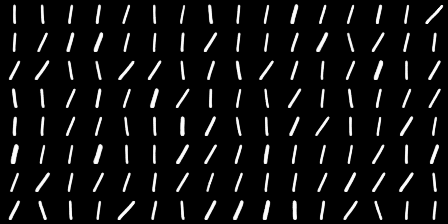}
\centering
\captionsetup{justification=centering}
\caption{Samples generated by Entropic GAN trained on MNIST-1 dataset}
\label{fig:mnist_gen1} 
\end{figure*}

\subsection{GAN's Training on CIFAR}

We trained a DCGAN model on CIFAR dataset using the discriminator and generator architecture used in ~\cite{radford2015unsupervised}. The hyperparamer details are mentioned in Table.~\ref{tab:hyperparams_CIFAR}. Some sample generations are provided in Figure~\ref{fig:cifar_gen}

\subsection{GAN's Training on LSUN-Bedrooms dataset}

We trained a WGAN model on LSUN-Bedrooms dataset with DCGAN architectures for generator and discriminator networks~\citep{arjovsky2017wasserstein}. The hyperparameter details are given in Table.~\ref{tab:hyperparams_LSUN}, and some sample generations are provided in Fig.~\ref{fig:lsun_gen}

\begin{table}[!b]
\caption{Hyper-parameter details for CIFAR-10 experiment}
\label{tab:hyperparams_CIFAR}
\centering
\begin{tabular}{c c}
\hline
Parameter & Config \\   
\hline
\hline
Generator learning rate & $0.0002$ \\
Discriminator learning rate & $0.0002$ \\
Batch size & $64$ \\
Optimizer & Adam \\
Optimizer params & $\beta_{1} = 0.5$, $\beta_{2} = 0.99$\\
Number of training epochs & 100 \\
\hline

\end{tabular}
\end{table}

\begin{figure*}
\includegraphics[width=0.55\textwidth]{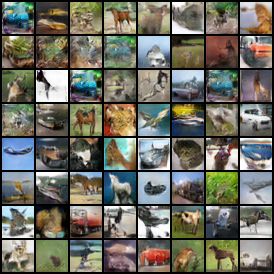}
\centering
\captionsetup{justification=centering}
\caption{Samples generated by DCGAN model trained on CIFAR dataset}
\label{fig:cifar_gen} 
\end{figure*}

\begin{table}[!b]
\caption{Hyper-parameter details for LSUN-Bedrooms experiment}
\label{tab:hyperparams_LSUN}
\centering
\begin{tabular}{c c}
\hline
Parameter & Config \\   
\hline
\hline
Generator learning rate & $0.00005$ \\
Discriminator learning rate & $0.00005$ \\
Clipping parameter $c$ & 0.01 \\
Number of critic iters per gen iter & 5 \\
Batch size & $64$ \\
Optimizer & RMSProp \\
Number of training iterations & 70000 \\
\hline

\end{tabular}
\end{table}

\begin{figure*}
\includegraphics[width=0.55\textwidth]{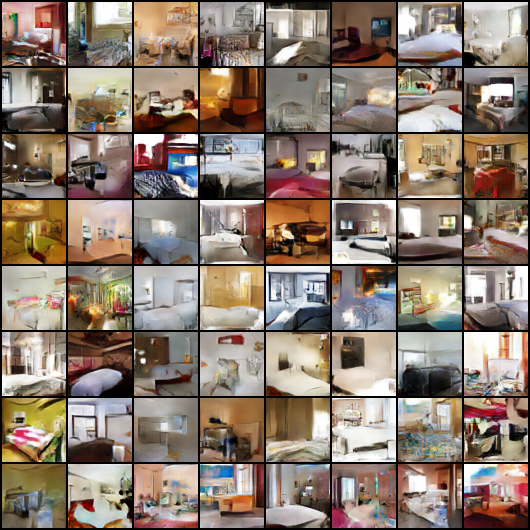}
\centering
\captionsetup{justification=centering}
\caption{Samples generated by WGAN model trained on LSUN-Bedrooms	 dataset}
\label{fig:lsun_gen} 
\end{figure*}

\end{document}